\definecolor{OldVerRed}{HTML}{E24A33}
\definecolor{FixVerBlue}{HTML}{348ABD}
\definecolor{NewVerBlue}{HTML}{988ED5}
\newcommand{\newver}[1]{\textcolor{NewVerBlue}{}}
\DeclareMathOperator{\CLIP}{CLIP}
\theoremstyle{definition}
\newtheorem{definition}{Definition}
\newtheorem{theorem}{Theorem}
\newtheorem{assumption}{Assumption}
\theoremstyle{remark}
\newtheorem{remark}{Remark}
\declaretheoremstyle[spaceabove=-8pt,spacebelow=8pt,headfont=\normalfont\itshape,postheadspace=0.5em,qed=\qedsymbol ]{mystyle}
\newcommand{\Mtuple}{\mathcal{M}}
\newcommand{\Stateset}{\mathcal{X}}
\newcommand{\Actionset}{\mathcal{U}}
\newcommand{\Trans}{f}
\newcommand{\Cost}{c}
\newcommand{\Failset}{\mathcal{F}}
\newcommand{\Failfunc}{h}
\newcommand{\Goalset}{\mathcal{G}}
\newcommand{\Goalfunc}{g}
\newcommand{\ind}[1]{\mathbbm{I}_{#1}}
\newcommand{\GAE}{\mathrm{GAE}}
\newcommand{\AlgName}{RC-PPO}
\definecolor{AvoidRed}{HTML}{E24A33}
\definecolor{ReachGreen}{HTML}{8EBA42}
\definecolor{ReachGreen2}{HTML}{6b8c31}
\title{Solving Minimum-Cost Reach Avoid using Reinforcement Learning}
\author{Oswin So* \\
  Department of Aeronautics and Astronautics \\
  MIT \\
  \texttt{oswinso@mit.edu} \\
  \And 
  Cheng Ge* \\
  Department of Aeronautics and Astronautics \\
  MIT \\
  \texttt{gec\_mike@mit.edu} \\
  \And
  Chuchu Fan \\
  Department of Aeronautics and Astronautics \\
  MIT \\
  \texttt{chuchu@mit.edu}
} 
\begin{document}

\maketitle

\def\thefootnote{*}\footnotetext{These authors contributed equally to this work}
\def\thefootnote{\arabic{footnote}}

\begin{abstract}

Current reinforcement-learning methods are unable to \textit{directly} learn policies that solve the minimum cost reach-avoid problem to minimize cumulative costs subject to the constraints of reaching the goal and avoiding unsafe states, as the structure of this new optimization problem is incompatible with current methods.
Instead, a \textit{surrogate} problem is solved where all objectives are combined with a weighted sum.
However, this surrogate objective results in suboptimal policies that do not directly minimize the cumulative cost.
In this work, we propose \textbf{\AlgName{}}, a reinforcement-learning-based method for solving the minimum-cost reach-avoid problem by using connections to Hamilton-Jacobi reachability.
Empirical results demonstrate that \AlgName{} learns policies with comparable goal-reaching rates to while achieving up to $57\%$ lower
cumulative costs compared to existing methods
on a suite of minimum-cost reach-avoid benchmarks on the Mujoco simulator. The project page can be found at \href{https://oswinso.xyz/rcppo/}{https://oswinso.xyz/rcppo/}.
 \end{abstract}

\section{Introduction}
\label{section:intro}

Many real-world tasks can be framed as a constrained optimization problem
where reaching a goal at the terminal state and ensuring safety (i.e., reach-avoid) is desired while minimizing some cumulative cost as an objective function, which we term the \textit{minimum-cost} reach-avoid problem.

The cumulative cost, which differentiates this from the traditional reach-avoid problem, can be used to model desirable aspects of a task such as minimizing energy consumption, maximizing smoothness, or any other pseudo-energy function, and allows for choosing the most desirable policy among many policies that can satisfy the reach-avoid requirements. 
For example, energy-efficient autonomous driving \cite{qi2019deep,zhang2021safe} can be seen as a task where the vehicle must reach a destination, follow traffic rules, and minimize fuel consumption. 
Minimizing fuel use is also a major concern for low-thrust or energy-limited systems such as spacecraft~\cite{rasotto2016multi} and quadrotors~\cite{langelaan2007long}. Quadrotors often have to choose limited battery life to meet the payload capacity. Hence, minimizing their energy consumption, which can be done by taking advantage of wind patterns, is crucial for keeping them airborne to complete more tasks.
Other use-cases important for climate change include plasma fusion (reach a desired current, minimize the total risk of plasma disruption)~\cite{wang2024active} and voltage control (reach a desired voltage level, minimize the load shedding amount)~\cite{huang2020accelerated}.

If only a single control trajectory is desired, this class of problems can be solved using numerical trajectory optimization
by either optimizing the timestep between knot points \cite{rosmann2015timed} or a bilevel optimization approach that adjusts the number of knot points in an outer loop \cite{pinson2018trajectory,hong2021free,stachowicz2022optimal}.
However, in this setting, the dynamics are assumed to be known, and only a single trajectory is obtained. Therefore, the computation will needs to be repeated when started from a different initial state. The computational complexity of trajectory optimization prevents it from being used in real time.
Moreover, the use of nonlinear numerical optimization may result in poor solutions that lie in suboptimal local minim~~\cite{ernst2008reinforcement}.

Alternatively, to obtain a control policy, reinforcement learning (RL) can be used.
However, existing methods are unable to directly solve the minimum-cost reach-avoid problem. Although RL has been used to solve many tasks where reaching a goal is desired, goal-reaching is encouraged as a \textit{reward} instead of as a \textit{constraint} via the use of either a sparse reward at the goal \cite{andrychowicz2017hindsight,plappert2018multi,trott2019keeping}, or a surrogate dense reward \cite{trott2019keeping,liu2022goal}\footnote{If the dense reward is not specified correctly, however, it can lead to unwanted local minima \cite{trott2019keeping} that optimize the reward function in an undesirable manner, i.e., \textit{reward hacking} \cite{amodei2016faulty,agrawal2022task}}. However, posing the reach constraint as a reward then makes it difficult to optimize for the cumulative cost at the same time. In many cases, this is done via a weighted sum of the two terms \cite{brockman2016openai,margolis2021learning,gupta2024behavior}. However, the optimal policy of this new \textit{surrogate} objective may not necessarily be the optimal policy of the original problem. Another method of handling this is to treat the cumulative cost as a constraint and solve for a policy that maximizes the reward while keeping the cumulative cost under some fixed threshold, resulting in a new constrained optimization problem that can be solved as a constrained Markov decision process (CMDP) \cite{altman2004constrained}. However, the choice of this fixed threshold becomes key: too small and the problem is not feasible, destabilizing the training process. Too large, and the resulting policy will simply ignore the cumulative cost.

To tackle this issue,
we propose \textbf{Reach Constrained Proximal Policy Optimization(RC-PPO)}, a new algorithm that targets the minimum-cost reach-avoid problem. We first convert the reach-avoid problem to a reach problem on an augmented system and use the corresponding \textit{reach} value function to compute the optimal policy. Next, we use a novel two-step PPO-based RL-based framework to learn this value function and the corresponding optimal policy.
The first step uses a PPO-inspired algorithm to solve for the optimal value function and policy, conditioned on the cost upper bound.
The second step fine-tunes the value function and solves for the least upper bound on the cumulative cost to obtain the final optimal policy.
Our main contributions are summarized below:
\begin{itemize}[leftmargin=1em]
    \item We prove that the minimum-cost reach-avoid problem can be solved by defining a set of augmented dynamics and a simplified constrained optimization problem.
    \item We propose \AlgName{}, a novel algorithm based on PPO that targets the minimum-cost reach-avoid problem, and prove that our algorithm converges to a locally optimal policy.
    \item Simulation experiments show that \AlgName{} achieves reach rates comparable with the baseline method with the highest reach rate while achieving significantly lower cumulative costs.
\end{itemize}

\section{Related Works}
\label{section:related_works}

\paragraph{Terminal-horizon state-constrained optimization}
Terminal state constraints are quite common in the dynamic optimization literature.
For the finite-horizon case, for example, one method of guaranteeing the stability of model predictive control (MPC) is with the use of a terminal state constraint \cite{fagiano2013generalized}. Since MPC is implemented as a discrete-time finite-horizon numerical optimization problem, the terminal state constraints can be easily implemented in an optimization program as a normal state constraint.
The case of a flexible-horizon constrained optimization is not as common but can still be found. For example, one method of time-optimal control is to treat the integration timestep as a control variable while imposing state constraints on the initial and final knot points \cite{rosmann2015timed}.
Another method is to consider a bilevel optimization problem, where the number of knot points is optimized for in the outer loop  \cite{pinson2018trajectory,hong2021free,stachowicz2022optimal}.

\paragraph{Goal-conditioned Reinforcement Learning}
There have been many works on goal-conditioned reinforcement learning. These works mainly focus on the challenges of tackling sparse rewards \cite{andrychowicz2017hindsight,trott2019keeping,wu2018laplacian,liu2022goal} or even learning without rewards completely, either via representation learning objectives
\cite{nair2018overcoming,ghosh2019learning,nair2018visual,warde2018unsupervised,sun2019policy,nair2019hierarchical,campero2020learning,nair2020goal,mendonca2021discovering,eysenbach2022contrastive}
or by using contrastive learning to learn reward functions \cite{fischinger2013learning,christiano2017deep,xie2018few,fu2018variational,brown2019extrapolating,konyushkova2020semi,kalashnikov2021mt,xu2021positive,zolna2021task,nair2022learning}, often in imitation learning settings \cite{ho2016generative,fu2017learning}.
However, the \textit{manner} in which these goals are reached is not considered, and it is difficult to extend these works to additionally minimize some cumulative cost.

\paragraph{Constrained Reinforcement Learning}
One way of using existing techniques to approximately tackle the minimum-cost reach-avoid problem is to flip the role of the cumulative-cost objective and the goal-reaching constraint by treating the goal-reaching constraint as an objective via a (sparse or dense) reward and the cumulative-cost objective as a constraint with a cost threshold, turning the problem into a CMDP \cite{altman2004constrained}. In recent year, there has been significant interest in deep RL methods for solving CMDPs \cite{achiam2017constrained,tessler2018reward,stooke2020responsive}.
While these methods are effective at solving the transformed CMDP problem, the optimal policy to the CMDP may not be the optimal policy to the original minimum-cost reach-constrained problem, depending on the choice of the cost constraint.

\paragraph{State Augmentation in Constrained Reinforcement Learning}

To improve reward structures in constrained reinforcement learning, especially in safety-critical systems, one effective approach is state augmentation. This technique integrates constraints, such as safety or energy costs, into the augmented state representation, allowing for more effective constraint management through the reward mechanism \cite{sootla2022saute, jiang2023solving, jiang2024reward}. While these methods enhance the reward structure for solving the transformed CMDP problems, they still face the inherent limitation of the CMDP framework: the optimal policy for the transformed CMDP may not always correspond to the optimal solution for the original problem. 

\paragraph{Reachability Analysis}
Reachability analysis looks for solutions to the reach-avoid problem. That is, to solve for the set of initial conditions and an appropriate control policy to drive a system to a desired goal set while avoiding undesireable states.
Hamilton-Jacobi (HJ) reachability analysis \cite{tomlin2000game,lygeros2004reachability,mitchell2005time,margellos2011hamilton,bansal2017hamilton} provides a methodology for the case of dynamics in \textit{continuous-time} via the solution of a partial differential equation (PDE) and is conventionally solved via numerical PDE techniques that use state-space discretization \cite{mitchell2005time}.
This has been extended recently to the case of discrete-time dynamics and solved using off-policy \cite{fisac2019bridging,hsu2021safety} and on-policy \cite{so2023solving,ganai2024iterative} reinforcement learning. While reachability analysis concerns itself with the reach-avoid problem, we are instead interested in solutions to the \textit{minimum-cost} reach-avoid problem.

\section{Problem Formulation}
\label{section:problem_formulation}

In this paper, we consider a class of \textit{minimum-cost} reach-avoid problems defined by the tuple $\Mtuple \coloneqq \langle \Stateset, \Actionset, \Trans, \Cost, \Goalfunc, \Failfunc \rangle$. Here, $\Stateset \subseteq \mathbb{R}^n$ is the state space and $\Actionset \subseteq \mathbb{R}^m$ is the action space. The system states $x_t \in \Stateset$ evolves under the \textit{deterministic} discrete dynamics $\Trans : \Stateset \times \Actionset \to \Stateset$ as 
\begin{equation}
    x_{t+1} = f(x_t, u_t).
\end{equation}
The control objective for the system states $x_t$ is to reach the goal region $\Goalset$ and avoid the unsafe set $\Failset$ while minimizing the cumulative cost $\sum_{t=0}^{T-1} c(x_t, \pi(x_t))$ under control input $u_t = \pi(x_t)$ for a designed control policy $\pi: \Stateset \rightarrow \Actionset$. Here, $T$ denotes the first timestep that the agent reaches the goal $\Goalset$.
The sets $\Goalset$ and $\Failset$ are given as the $0$-sublevel and strict $0$-superlevel sets $g: \Stateset \to \mathbb{R}$ and $h:\Stateset \to \mathbb{R}$ respectively, i.e.,
\begin{equation}
    \Goalset \coloneqq \{x \in \Stateset \mid \Goalfunc(x) \le 0\}, \quad \Failset \coloneqq \{x \in \Stateset \mid \Failfunc(x) > 0\}
\end{equation}
This can be formulated formally as finding a policy $\pi$ that solves the following constrained \textit{flexible} final-time optimization problem for a given initial state $x_0$:
\begin{mini!}[2]
{\pi, T}{\sum_{t=0}^{T-1} c \big (x_t, \pi(x_t) \big ) \label{opt:1a}}{\label{opt:1}}{}
\addConstraint{x_T \in \Goalset}{\label{opt:1b}}
\addConstraint{x_t \not \in \Failset}{\quad \forall t \in \{0, \dots, T\} \label{opt:1c}}
\addConstraint{x_{t+1}}{=\Trans \big(x_t, \pi(x_t) \big). \label{opt:1d}}
\end{mini!}
Note that as opposed to either traditional finite-horizon constrained optimization problems where $T$ is fixed or infinite-horizon problems where $T=\infty$, the time horizon $T$ is also a decision variable. Moreover, the goal constraint \eqref{opt:1b} is only enforced at the terminal timestep $T$. These two differences prevent the straightforward application of existing RL methods to solve \eqref{opt:1}.

\subsection{Reachability Analysis for Reach-Avoid Problems}

In discrete time, the set of initial states that can reach the goal $\Goalset$ without entering the avoid set $\Failset$ can be represented by the $0$-sublevel set of a reach-avoid value function $V^{\pi}_{g,h}$ \cite{hsu2021safety}.
Given functions $g$, $h$ describing $\Goalset$ and $\Failset$ and a policy $\pi$, the reach-avoid value function $V^{\pi}_{g,h} : \mathcal{X} \to \mathbb{R}$ is defined as
\begin{equation}\label{eq: reach-avoid V}
    V^{\pi}_{\Goalfunc, \Failfunc}(x_0)
    = \min_{T \in \mathbb{N}} \max \Big\{ \Goalfunc(x_{T}^{\pi}),\; \max_{t \in \{0, \dots, T\}} \Failfunc(x_{t}^{\pi}) \Big\},
\end{equation}
where $x_t^\pi$ denote the system state at time $t$ under a policy $\pi$ starting from an initial state $x_0^\pi = x_0$. In the rest of the paper, we suppress the argument $x_0$ for brevity whenever clear from the context.
It can be shown that the reach-avoid value function satisfies the following recursive relationship via the reach-avoid Bellman equation (RABE) \cite{hsu2021safety}
\begin{equation} \label{eq:reach_avoid_bellman}
    V_{\Goalfunc, \Failfunc}^{\pi}(x_t^{\pi})
    = \max \Big\{\Failfunc(x_t^{\pi}),\; \min\{\Goalfunc(x_t^{\pi}),\, V_{\Goalfunc, \Failfunc}^{\pi}(x_{t+1}^{\pi})\} \Big\}, \quad \forall t\geq 0.
\end{equation}
The Bellman equation \eqref{eq:reach_avoid_bellman} can then be used in a reinforcement learning framework (e.g., via a modification of soft actor-critic\cite{haarnoja2018soft,haarnoja2018soft2}) as done in \cite{hsu2021safety} to solve the reach-avoid problem.

Note that existing methods of solving reach-avoid problems through this formulation focus on minimizing the value function $V^\pi_{\Goalfunc,\Failfunc}$. This is not necessary as any policy that results in $V^\pi_{\Goalfunc,\Failfunc}\leq 0$ solves the reach-avoid problem, albeit without any cost considerations. However, it is often the case that we wish to minimize a cumulative cost (e.g., \eqref{opt:1a}) on top of the reach-avoid constraints \eqref{opt:1b}-\eqref{opt:1c} for a \textit{minimum-cost} reach-avoid problem.
To address this class of problems, we next present a modification to the reach-avoid framework that additionally enables the minimization of the cumulative cost.

\subsection{Reachability Analysis for Minimum-cost Reach-Avoid Problems}

We now provide a new framework to solve the minimum-cost reach-avoid by lifting the original system to a higher dimensional space and designing a set of augmented dynamics that allow us to convert the original problem into a reachability problem on the augmented system.

Let $\mathbbm{I}$ denote the shifted indicator function defined as
\begin{equation}
    \ind{b \in B} \coloneqq \begin{cases}+1 & b \in B,\\ -1& b \not\in B.\end{cases}
\end{equation}
Define the \textit{augmented} state as $\hat{x} = (x, y, z) \subseteq \hat{\Stateset} \coloneqq \Stateset \times \{-1, 1\} \times \mathbb{R}$. We now define a corresponding augmented dynamics function $f' : \hat{\Stateset} \times \Actionset \to \hat{\Stateset}$ as
\begin{equation}
    \label{equ:hyb_evo}
    \hat{\Trans}\big( x_t, y_t, z_t, u_t \big) = \big( f(x_t),\; \max\{ \ind{ f(x_t) \in \Failset},\, y_t\},\; z_t - c(x_t,\; u_t) \big),
\end{equation}
where $y_0 = \ind{x_0 \in \Failset}$. Note that $y_t=1$ if the state has entered the avoid set $\Failset$ at some timestep from $0$ to $t$ and is \textit{unsafe}, and $y_t=0$ if the state has not entered the avoid set $\Failset$ at any timestep from $0$ to $t$ and is \textit{safe}. Moreover, $z_t$ is equal to $z_0$ minus the cost-to-come, i.e., for state trajectory $x_{0 : t}$ and action trajectory $u_{0:t}$, i.e., 
\begin{equation}
    z_{t+1} = z_0 - \sum_{k = 0}^{t} c(x_t, u_t).
\end{equation}
Under the augmented dynamics, we now define the following augmented goal function $\hat{\Goalfunc} : \hat{\Stateset} \to \mathbb{R}$ as
\begin{equation}\label{equ:hyb_goal}
    \hat{\Goalfunc}(x, y, z) \coloneqq \max \{\Goalfunc(x),\, C y,\, -z\},
\end{equation}
where $C > 0$ is an arbitrary constant.\footnote{In practice, we use $C = \max_{x \in \Stateset} \Goalfunc(x)$.}
With this definition of $\hat{\Goalfunc}$, an augmented goal region $\hat{\Goalset}$ can be defined as
\begin{equation}
    \hat{\Goalset} \coloneqq \{ \hat{x} \mid \hat{\Goalfunc}(\hat{x}) \leq 0 \} = \{(x, y, z) \mid x \in \Goalset,\; y = -1,\; z \geq 0 \}.
\end{equation}
In other words, starting from initial condition $\hat{x}_0 = (x_0, y_0, z_0)$, reaching the goal on the augmented system $\hat{x}_T \in \hat{\Goalfunc}$ at timestep $T$ implies that 1) the goal is reached at $x_T$ for the original system, 2) the state trajectory remains safe and does not enter the avoid set $\Failset$, and 3) $z_0$ is an upper-bound on the total cost-to-come: $\sum_{t=0}^{T-1} c(x_t, u_t) \leq z_0$.
We call this the upper-bound property.
The above intuition on the newly defined augmented system is formalized in the following theorem, whose proof is provided in Appendix~\ref{app:proof_equ}.
\begin{theorem}
    \label{thm:equ}
For given initial conditions $x_0\in \mathcal X$, $z_0 \in \mathbb{R}$ and control policy $\pi$, consider the trajectory for the original system $\{x_0, \dots x_T\}$ and its corresponding trajectory for the augmented system $\{(x_0, y_0, z_0), \dots (x_T, y_T, z_T)\}$ for some $T>0$.
    Then, the reach constraint $x_T \in \Goalset$ \eqref{opt:1b}, avoid constraint $x_t \not \in \Failset\; \forall t\in \{0, 1, \dots, T\}$ \eqref{opt:1c} and the upper-bound property $z_0 \ge \sum_{k=0}^{T-1} c \big (x_k, \pi(x_k) \big )$ hold if and only if the augmented state reaches the augmented goal at time $T$, i.e., $(x_T, y_T, z_T) \in \hat{\Goalset}$.
\end{theorem}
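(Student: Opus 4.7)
The plan is to prove both directions of the equivalence by unpacking the definitions of the augmented dynamics and the augmented goal region, reducing the statement to a straightforward bookkeeping argument. The key observation is that the $y$ and $z$ components of the augmented state are constructed precisely to track the avoid constraint and the cost-to-come, so the augmented goal condition $\hat{g}(\hat{x}_T) \le 0$ decomposes into three independent scalar inequalities that are in one-to-one correspondence with the three conditions of the theorem.

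First, I would establish two auxiliary facts by induction on $t$. Using the recursion $y_{t+1} = \max\{\ind{f(x_t, u_t) \in \Failset},\, y_t\}$ together with the base case $y_0 = \ind{x_0 \in \Failset}$ and the fact that $y_t \in \{-1, +1\}$, one checks that
\begin{equation*}
y_t = -1 \iff x_k \notin \Failset \text{ for all } k \in \{0, \dots, t\}.
\end{equation*}
Similarly, unrolling $z_{t+1} = z_t - c(x_t, u_t)$ gives $z_t = z_0 - \sum_{k=0}^{t-1} c(x_k, u_k)$, which is just the cost-to-come accounting already mentioned in the excerpt.

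Next, by the definition of $\hat{\Goalfunc}$ in \eqref{equ:hyb_goal}, the condition $\hat{x}_T \in \hat{\Goalset}$, i.e.\ $\max\{\Goalfunc(x_T),\, C y_T,\, -z_T\} \le 0$, is equivalent to the conjunction of the three scalar inequalities $\Goalfunc(x_T) \le 0$, $C y_T \le 0$, and $-z_T \le 0$. The first is equivalent to $x_T \in \Goalset$, which is exactly the reach constraint \eqref{opt:1b}. The second, since $C > 0$ and $y_T \in \{-1, +1\}$, is equivalent to $y_T = -1$, and by auxiliary fact (a) this is equivalent to $x_k \notin \Failset$ for all $k \in \{0, \dots, T\}$, matching the avoid constraint \eqref{opt:1c}. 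The third is $z_T \ge 0$, which by auxiliary fact (b) is equivalent to $z_0 \ge \sum_{k=0}^{T-1} c(x_k, \pi(x_k))$, the upper-bound property.

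Since each step above is an ``if and only if'', chaining them gives both directions of the theorem simultaneously. There is essentially no main obstacle: the theorem is a definitional equivalence, and the only thing to be careful about is the indexing (in particular, initializing the $y$-induction with $y_0 = \ind{x_0 \in \Failset}$ so that the avoid constraint is enforced at $t=0$ as well, and matching the off-by-one in the $z$-recursion so that $z_T \ge 0$ bounds the sum up to $T-1$).
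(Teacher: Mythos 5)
Your proposal is correct and follows essentially the same route as the paper's proof: both decompose the augmented goal condition into the three component inequalities on $\Goalfunc(x_T)$, $y_T$, and $z_T$, and show each is equivalent to the corresponding constraint via the unrolled $y$- and $z$-recursions. Your version is slightly more explicit about the induction and the reduction of $\max\{\cdot\}\le 0$ to a conjunction, but the substance is identical.
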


With this construction, we have folded the avoid constraints $x_t \not\in \Failset$ \eqref{opt:1c} into the reach specification on the augmented system. In other words, solving the reach problem on the augmented system results in a reach-avoid solution of the original system. 
As a result, we can simplify the value function \eqref{eq: reach-avoid V} and Bellman equation \eqref{eq:reach_avoid_bellman}, resulting in the following definition of the reach value function $\tilde{V}_{\hat{g}} : \hat{\Stateset} \to \mathbb{R}$
\begin{equation}\label{eq: reach value func V}
    \Tilde{V}^{\pi}_{\hat{\Goalfunc}}(\hat x_0) = \min_{t \in \mathbb{N}} \hat{\Goalfunc}(\hat x_{t}^{\pi}).
\end{equation}
Similar to \eqref{eq: reach-avoid V}, the $0$-sublevel set of $\tilde{V}_{\hat{g}}$ describes the set of augmented states $\hat{x}$ that can reach the augmented goal $\hat{\Goalset}$.
We can also similarly obtain a recursive definition of the reach value function $\tilde{V}_{\hat{g}}$ given by the reachability Bellman equation (RBE)
\begin{equation} \label{property:reach_bellman}
    \tilde{V}_{\hat{\Goalfunc}}^{\pi}(x_t^{\pi}, y_t^{\pi}, z_t^{\pi})
    = \min \big\{ \hat{\Goalfunc}(x_t^{\pi},\; y_t^{\pi}, z_t^{\pi}), \tilde{V}_{\hat{\Goalfunc}}^{\pi}(x_{t+1}^{\pi}, y_{t+1}^{\pi}, z_{t+1}^{\pi}) 
    \big\} \quad \forall t\geq 0,
\end{equation}
whose proof we provide in Appendix~\ref{app:proof_bellman_reach}.

We now solve the minimum-cost reach-avoid problem using this augmented system.
By \Cref{thm:equ}, the $z_0$ is an upper bound on the cumulative cost to reach the goal while avoiding the unsafe set if and only if the augmented state $\hat{x}$ reaches the augmented goal. Since this upper bound is tight, the least upper bound $z_0$ that still reaches the augmented goal thus corresponds to the minimum-cost policy that satisfies the reach-avoid constraints.
In other words, the minimum-cost reach-avoid problem for a given initial state $x_0$ can be reformulated as the following optimization problem.
\begin{mini!}[2]
{\pi, z_0}{z_0 \label{opt:2a}}{\label{opt:2}}{}
\addConstraint{\tilde{V}_{\hat{\Goalfunc}}^{\pi}( x_0, \ind{x_0 \in \Failset}, z_0 )}{\le 0\label{opt:2b}}.
\end{mini!}

We refer to Appendix~\ref{app:equivalence} for a detailed derivation of the equivalence between the transformed Problem~\ref{opt:2} and the original minimum-cost reach-avoid Problem~\ref{opt:1}.

\begin{remark}[Connections to the epigraph form in constrained optimization]
    The resulting optimization problem \eqref{opt:2} can be interpreted as an epigraph reformulation \cite{boyd2004convex} of the minimum-cost reach-avoid problem \eqref{opt:1}.
The epigraph reformulation results in a problem with \textit{linear} objective but yields the same solution as the original problem \cite{boyd2004convex}. The construction we propose in this work can be seen as a \textit{dynamic} version of this epigraph reformulation technique originally developed for static problems and is similar to recent results that also make use of the epigraph form for solving infinite-horizon constrained optimization problems \cite{so2023solving}. 
\end{remark}

\section{Solving with Reinforcement Learning}
\label{section:method}

In the previous section, we reformulated the minimum-cost reach-avoid problem by constructing an augmented system and used its reach value function \eqref{eq: reach value func V} in a new constrained optimization problem \eqref{opt:2} over the cost upper-bound $z_0$.
In this section, we propose Reachability Constrained Proximal Policy Optimization (\AlgName), a two-phase RL-based method for solving \eqref{opt:2} (see \Cref{fig:overview}).

\begin{figure}[t]
    \centering
    \includegraphics[width=\textwidth]{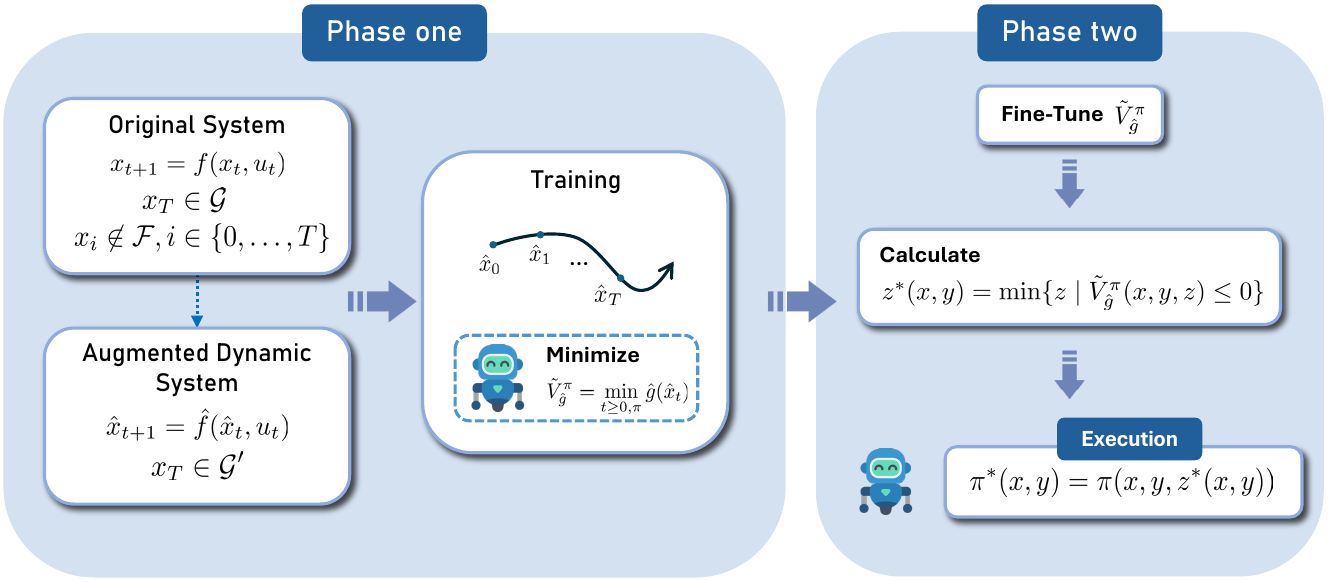}
    \caption{\textbf{Summary of the \AlgName{} algorithm.}
    In phase one, the original dynamic system is transformed into the augmented dynamic system defined in~\eqref{equ:hyb_evo}. Then RL is used to optimize value function $\Tilde{V}^{\pi}_{\hat{\Goalfunc}}$ and learn a stochastic policy $\pi$.
    In phase two, we fine-tune $\Tilde{V}^{\pi}_{\hat{\Goalfunc}}$ on a deterministic version of $\pi$ and compute the optimal upper-bound $z^*$ to obtain the optimal deterministic policy $\pi^*$.
    }
    \label{fig:overview}
    \vspace{-1em}
\end{figure} 

\subsection{Phase 1: Learn $z$-conditioned policy and value function}
In the first step, we learn the optimal policy $\pi$ and value function $\tilde{V}^{\pi}_{\hat{\Goalfunc}}$, as functions of the cost upper-bound $z_0$, using RL.
To do so, we consider the policy gradient framework \cite{sutton1999policy}.
However, since the policy gradient requires a stochastic policy in the case of deterministic dynamics \cite{silver2014deterministic}, we consider an analog of the developments made in the previous section but for the case of a stochastic policy.
To this end, we redefine the reach value function $\Tilde{V}^\pi_{\hat{\Goalfunc}}$ using a similar Bellman equation under a stochastic policy as follows.

\begin{definition}[Stochastic Reachability Bellman Equation]
Given function $\hat{\Goalfunc}$ in \eqref{equ:hyb_goal}, a stochastic policy $\pi$, and initial conditions $x_0\in \mathcal X, z_0\in \mathbb R$, the stochastic reach value function $\Tilde{V}^\pi_{\hat{\Goalfunc}}$ is defined as the solution to the following stochastic reachability Bellman equation (SRBE):
    \label{def:reach_bellman_sto}
    \begin{equation}
    \tilde{V}_{\hat{\Goalfunc}}^{\pi}(\hat{x}_t) = \mathbb{E}_{\tau \sim \pi}[\min \{\hat{\Goalfunc}(\hat{x}_t), \tilde{V}_{\hat{\Goalfunc}}^{\pi}(\hat{x}_{t+1})\}] \quad \forall t\geq 0,
    \end{equation}
    where $\hat x_0 = (x_0, y_0, z_0)$ with $y_0 = \mathbbm{I}_{x_0\in \Failset}$.
\end{definition}

For this stochastic Bellman equation, the Q function \cite{sutton2018reinforcement} is defined as
\begin{align} 
    \label{eq:q_fn_undiscounted}
    \begin{split}
        \tilde{Q}_{\hat{\Goalfunc}}^{\pi}(\hat{x}_t, u_t) &= \min \{\hat{\Goalfunc}(\hat{x}_t), \tilde{V}_{\Goalfunc}(\hat{x}_{t+1})\} .
    \end{split}
\end{align}
We next define the dynamics of our problem with stochastic policy below.
\begin{definition}[Reachability Markov Decision Process]
    \label{def:reach_markov}
The Reachability Markov Decision Process is defined on the augmented dynamic in \Cref{equ:hyb_evo} with an added absorbing state $s_0$. We define the transition function $\Trans_r'$ with the absorbing state as
    \begin{equation}
        \Trans_r'(\hat{x}, u) = \begin{cases}
            \hat{\Trans}(\hat{x}, u), \quad \quad &\text{if $\Tilde{V}_{\hat{\Goalfunc}}^{\pi}(\hat{x}) > \hat{\Goalfunc}(\hat{\Trans}(\hat{x}, u))$}, \\
            s_0, \quad \quad &\text{if $\Tilde{V}_{\hat{\Goalfunc}}^{\pi}(\hat{x}) \le \hat{\Goalfunc}(\hat{\Trans}(\hat{x}, u))$} .
        \end{cases}
    \end{equation}
    Denote by $d'_{\pi}(\hat{x})$ the stationary distribution under stochastic policy $\pi$ starting at $\hat{x} \in \Stateset \times \{-1, 1\} \times \mathbb{R}$.
\end{definition}

We now derive a policy gradient theorem for the Reachability MDP in \Cref{def:reach_markov} which yields an almost identical expression for the policy gradient.
\begin{theorem}{(Policy Gradient Theorem)}
\label{thm:policy_gradient}
For policy $\pi_{\theta}$ parameterized by $\theta$, the gradient of the policy value function $\Tilde{V}^{\pi_{\theta}}_{\hat{\Goalfunc}}$ satisfies
    \begin{align}
        \begin{split}
            & \nabla_\theta \tilde{V}_{\hat{\Goalfunc}}^{\pi_\theta}\left(\hat{x}\right) \propto \mathbb{E}_{\hat{x}' \sim d'_{\pi}(\hat{x}), u \sim \pi_\theta}\left[\tilde{Q}^{\pi_\theta}\left(\hat{x}', u\right) \nabla_\theta \ln \pi_\theta\left(u \mid \hat{x}'\right) \right] ,
        \end{split}
    \end{align}
under the stationary distribution $d_{\pi}^{'}(\hat{x})$ for Reachability MDP in \Cref{def:reach_markov} 
\end{theorem}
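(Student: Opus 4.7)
The plan is to mirror the classical Sutton--Barto policy gradient derivation, with the key observation that the absorbing-state construction of the Reachability MDP in \Cref{def:reach_markov} was designed precisely to absorb the indicator function that falls out of differentiating the $\min$ in the SRBE. Once this correspondence is recognized, the min-based recursion collapses into one that looks formally identical to the standard undiscounted episodic policy gradient recursion, with $d'_\pi$ playing the role of the usual state-visitation measure.

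First, I would rewrite the marginalized SRBE as
\[
\tilde{V}^{\pi_\theta}_{\hat{\Goalfunc}}(\hat{x}) \;=\; \sum_u \pi_\theta(u \mid \hat{x})\, \tilde{Q}^{\pi_\theta}(\hat{x},u),
\]
apply $\nabla_\theta$ and the product rule, and split the result into a log-derivative piece $\sum_u \tilde{Q}^{\pi_\theta}(\hat{x},u)\,\nabla_\theta \pi_\theta(u\mid\hat{x})$ and a bootstrapping piece $\sum_u \pi_\theta(u\mid\hat{x})\,\nabla_\theta \tilde{Q}^{\pi_\theta}(\hat{x},u)$.

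Second, I would unpack the bootstrapping piece using the fact that $\hat{\Goalfunc}(\hat{x})$ is $\theta$-independent. Chain-ruling through the $\min$ in $\tilde{Q}^{\pi_\theta}(\hat{x},u) = \min\{\hat{\Goalfunc}(\hat{x}),\, \tilde{V}^{\pi_\theta}_{\hat{\Goalfunc}}(\hat{\Trans}(\hat{x},u))\}$ yields a factor $\ind{\tilde{V}^{\pi_\theta}_{\hat{\Goalfunc}}(\hat{\Trans}(\hat{x},u)) < \hat{\Goalfunc}(\hat{x})}$ multiplying $\nabla_\theta \tilde{V}^{\pi_\theta}_{\hat{\Goalfunc}}(\hat{\Trans}(\hat{x},u))$. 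This indicator is exactly the event under which $\Trans_r'$ in \Cref{def:reach_markov} performs a normal transition rather than sending the trajectory to the absorbing state $s_0$. Adopting the natural convention $\nabla_\theta \tilde{V}^{\pi_\theta}_{\hat{\Goalfunc}}(s_0) \coloneqq 0$, the indicator can therefore be folded into the transition kernel of the Reachability MDP, producing a standard Bellman-type recursion for $\nabla_\theta \tilde{V}^{\pi_\theta}_{\hat{\Goalfunc}}$ in which min-structure has disappeared entirely.

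Third, I would unroll this recursion step-by-step along $\Trans_r'$, collecting the log-derivative contribution picked up at each time step. Repackaging the visitation probabilities into the stationary distribution $d'_\pi$ and applying the identity $\nabla_\theta \pi_\theta(u \mid \hat{x}') = \pi_\theta(u \mid \hat{x}')\nabla_\theta \ln \pi_\theta(u \mid \hat{x}')$ then produces the claimed expression up to a normalization constant that absorbs the expected episode length of the Reachability MDP, matching the ``$\propto$'' in the statement. The main obstacle I anticipate is the non-smoothness of $\min$ on the tie set $\{\tilde{V}^{\pi_\theta}_{\hat{\Goalfunc}}(\hat{\Trans}(\hat{x},u)) = \hat{\Goalfunc}(\hat{x})\}$; I would dispatch this either by restricting to stochastic policies whose induced laws render this set $d'_\pi$-negligible, or by passing to the Clarke subdifferential, in which case the boundary case of $\Trans_r'$ picks out a consistent subgradient element and the identity becomes an inclusion that holds pointwise.
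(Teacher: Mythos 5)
Your proposal follows essentially the same route as the paper's proof: expand $\tilde{V}^{\pi_\theta}_{\hat{\Goalfunc}} = \sum_u \pi_\theta(u\mid\hat{x})\tilde{Q}^{\pi_\theta}(\hat{x},u)$, apply the product rule, differentiate through the $\min$ to obtain an indicator that is absorbed into the transition kernel of the Reachability MDP, and unroll the resulting recursion into the visitation measure $d'_\pi$. The only substantive difference is that you explicitly handle the non-differentiability of $\min$ on the tie set (via a negligibility assumption or Clarke subdifferentials), a point the paper's proof passes over silently.
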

The proof of this new policy gradient theorem (\Cref{thm:policy_gradient}) follows the proof of the normal policy gradient theorem \cite{sutton2018reinforcement}, differing only in the expression of the stationary distribution. We provide the proof in \Cref{app:proof_policy_gradient}.

Since the stationary distribution $d'_{\pi}(\hat{x})$ in \Cref{def:reach_markov} is hard to simulate during the learning process, we instead consider the stationary distribution under the original augmented dynamic system.
Note that Definition~\ref{def:reach_bellman_sto} does not induce a contraction map, which harms performance.
To fix this, we apply the same trick as \cite{hsu2021safety} by introducing an additional discount factor $\gamma$ into the Bellman equation \eqref{property:reach_bellman}:
\begin{align}
    \label{equ:contraction}
    \begin{split}
        \tilde{V}_{\hat{\Goalfunc}}^{\pi}(\hat{x}_t) &= (1-\gamma) \hat{\Goalfunc}(\hat{x}_t) + \gamma \mathbb{E}_{\hat{x}_{t+1} \sim \tau}[\min \{\hat{\Goalfunc}(\hat{x}_t), \tilde{V}_{\hat{\Goalfunc}}^{\pi}(\hat{x}_{t+1})\}] .
    \end{split}
\end{align}
This provides us with a contraction map (proved in \cite{hsu2021safety}) and we leave the discussion of choosing $\gamma$ in Appendix~\ref{appendix:theoretical_analysis}.
The Q-function corresponding to \eqref{equ:contraction} is then given as
\begin{align} \label{eq:q_fn_discounted}
    \begin{split}
        \tilde{Q}_{\hat{\Goalfunc}}^{\pi}(\hat{x}_t, u_t) &= (1-\gamma) \hat{\Goalfunc}(\hat{x}_t) + \gamma \min \{\hat{\Goalfunc}(\hat{x}_t), \tilde{V}_{\hat{\Goalfunc}}^{\pi}(\hat{x}_{t+1})\} .
    \end{split}
\end{align}
Following proximal policy optimization (PPO) \cite{schulman2017proximal}, we use generalized advantage estimation (GAE) \cite{schulman2015high} to compute a variance-reduced advantage function $\tilde{A}^\pi_{\hat{\Goalfunc}} = \tilde{Q}^\pi_{\hat{\Goalfunc}} - \tilde{V}^\pi_{\hat{\Goalfunc}}$ for the policy gradient (\Cref{thm:policy_gradient}) using the $\lambda$-return \cite{sutton2018reinforcement}.
We refer to Appendix~\ref{app:gae} for the definition of $\hat{A}_{\hat{\Goalfunc}}^{\pi (\GAE)}$ and denote the loss function when $\theta = \theta_{l}$ as
\begin{align}
    \mathcal{J}_{\pi}(\theta)&=\mathbb{E}_{\hat{x}, u \sim \pi_{\theta_{l}}}\left[\; \overline{A^{\pi_{\theta_{l}}}}(\hat{x}, u) \,\right], \\
    \overline{A^{\pi_{\theta_{l}}}}(\hat{x}, u)&=\max  \left(-\frac{\pi_\theta(u \mid \hat{x})}{\pi_{\theta_{l}}(u \mid \hat{x})} \hat{A}^{\pi_{\theta_{l}} (GAE)}_{\hat{\Goalfunc}}(\hat{x}, u),\; \CLIP\left(\epsilon, -\hat{A}^{\pi_{\theta_{l}} (GAE)}_{\hat{\Goalfunc}}(\hat{x}, u)\right)\right).
\end{align}

We wish to obtain the optimal policy $\pi$ and the value function $\tilde{V}_{\hat{\Goalfunc}}^{\pi_\theta}$ conditioned on $z_0$.
Hence, at the beginning of each rollout, we uniformly sample $z_0$ within a user-specified range $[z_{\min}, z_{\max}]$.
Since the optimal $z_0$ is the cumulative cost of the policy that solves the minimum-cost reach-avoid problem, $z_{\min}$ and $z_{\max}$ are user-specified bounds on the optimal cost.
In particular, when the cost-function is bounded and the optimal cost is non-negative, we can choose $z_{\min}$ to be some negative number and $z_{\max}$ to be the maximum possible discounted cost.

\subsection{Phase 2: Solving for the optimal $z$}
In the second phase, we first compute a deterministic version $\pi^*$ of the stochastic policy $\pi$ from phase 1 by taking the mode.
Next, we fine-tune $V^{\pi}_{\hat{\Goalfunc}}$ based on the now deterministic $\pi^*$ to obtain $\Tilde{V}^{*}_{\hat{\Goalfunc}}$.

Given any state $x$, the final policy is then obtained by solving for the optimal cost upper-bound $z^*$ from \Cref{opt:2}, which is a 1D root-finding problem and can be easily solved using bisection.
Note that \Cref{opt:2} must be solved online for $z^*$ at each state $x$.
Alternatively, to avoid performing bisection online, we can instead learn the map $(x, y) \mapsto z^*$ \textit{offline} using regression with randomly sampled $(x, y)$ pairs and $z^*$ labels obtained from bisection offline.

We provide a convergence proof of an actor-critic version of our method without the GAE estimator in \Cref{app:convergence_proof}.

\section{Experiments}
\label{section: experiments}

\paragraph{Baselines}
We consider two categories of RL baselines. The first is goal-conditioned reinforcement learning which focuses on goal-reaching but does not consider minimization of the cost.
For this category, we consider the Contrastive Reinforcement Learning (CRL) \cite{eysenbach2022contrastive} method.
We also compare against safe RL methods that solve CMDPs.
As the minimum-cost reach-avoid problem \eqref{opt:1} cannot be posed as a CMDP, we reformulate \eqref{opt:1} into the following \textit{surrogate} CMDP:
\begin{mini!}[2]
{\pi}{\mathbb{E}_{x_t, u_t \sim d_\pi}\sum_{t} \big [-\gamma^{t} r(x_t, u_t) \big ] \label{opt:3a}}{\label{opt:3}}{}
\addConstraint{\mathbb{E}_{x_t, u_t \sim d_\pi}\sum_{t} \big [\gamma^{t} \mathbbm{1}_{x_t \in \Failset} \times C_{\mathrm{fail}} \big ] \le 0}{\label{opt:3b}}
\addConstraint{\mathbb{E}_{x_t, u_t \sim d_\pi}\sum_{t} \big [\gamma^{t} c(x_t, u_t) \big ] \le \mathcal{X}_{\mathrm{threshold}}.}{\label{opt:3c}}
\end{mini!}
where the reward $r$ incentivies goal-reaching, $C_{\mathrm{fail}}$ is a term balancing two constraint terms, and $\mathcal{X}_{\mathrm{threshold}}$ is a hyperparameter on the cumulative cost. 
For this category, we consider the CPPO \cite{stooke2020responsive} and RESPO \cite{ganai2024iterative}.
Note that RESPO also incorporates reachability analysis to adapt the Lagrange multipliers for each constraint term. We implement the above CMDP-based baselines with three different choices of $\mathcal{X}_{\mathrm{thresholds}}$: $\mathcal{X}_{\mathrm{L}}$, $\mathcal{X}_{\mathrm{M}}$ and $\mathcal{X}_{\mathrm{H}}$.
For RESPO, we found $\mathcal{X}_{\mathrm{M}}$ to outperform both $\mathcal{X}_{\mathrm{L}}$ and $\mathcal{X}_{\mathrm{H}}$ and thus only report results for $\mathcal{X}_{\mathrm{M}}$.

We also consider the static Lagrangian multiplier case. In this setting, the reward function becomes $r(x_t) - \beta(\mathbbm{1}_{x_t \in \Failset} \times C_{\mathrm{fail}} + c(x_t, u_t))$ for a constant Lagrange multiplier $\beta$. We consider two different levels of $\beta$ ($\beta_{\mathrm{L}}$, $\beta_{\mathrm{H}}$) in our experiments, resulting in the baselines PPO\_$\beta_{\mathrm{L}}$, PPO\_$\beta_{\mathrm{H}}$, SAC\_$\beta_{\mathrm{L}}$, SAC\_$\beta_{\mathrm{H}}$. More details are provided in Appendix~\ref{app:implement_detail}.

\paragraph{Benchmarks}
We compare \AlgName{} with baseline methods on several minimum-cost reach-avoid environments.
We consider an inverted pendulum (\texttt{Pendulum}), an environment from Safety Gym \cite{ray2019benchmarking} (\texttt{PointGoal}) and two custom environments from MuJoCo \cite{todorov2012mujoco}, (\texttt{Safety Hopper}, \texttt{Safety HalfCheetah}) with added hazard regions and goal regions.
We also consider a 3D quadrotor navigation task in a simulated wind field for an urban environment \cite{waslander2009wind,bose2018wall}
(\texttt{WindField}) and an Fixed-Wing avoid task from \cite{so2023solving} with an additional goal region (\texttt{FixedWing}). More details on the benchmark can be found in Appendix~\ref{app:experiment_detail}.

\begin{figure}
    \centering
\includegraphics[width=\linewidth]{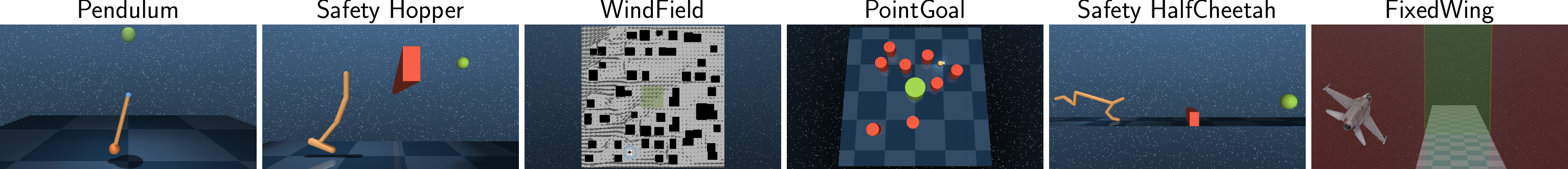}
    \caption{\textbf{Illustrations of the benchmark tasks.} In each picture, \textcolor{AvoidRed}{\textbf{red}} denotes the unsafe region to be avoided, while \textcolor{ReachGreen}{\textbf{green}} denotes the goal region to be reached.}
    \label{fig:env_summary}
\end{figure}

\paragraph{Evaluation Metrics}
Since the goal of \AlgName{} is minimizing cost consumption while reaching goal without entering the unsafe region $\Failset$. We evaluate algorithm performance based on (i) reach rate, (ii) cost. The \textbf{reach rate} is the ratio of trajectories that enter goal region $\Goalset$ without violating safety along the trajectory. The \textbf{cost} denotes the cumulative cost over the trajectory $\sum_{k = 0}^T c(x_k, \pi(x_k))$. 

\subsection{Sparse Reward Setting}
We first compare our algorithm with other baseline algorithms under a sparse reward setting (\Cref{fig:sparse_reach_rate}).
In all environments, the reach rate for the baseline algorithms is very low. Also, there is a general trend between the reach rate and the Lagrangian coefficient. CPPO\_$\mathcal{X}_{\mathrm{L}}$,  PPO\_$\beta_{\mathrm{H}}$  and SAC\_$\beta_{\mathrm{H}}$ have higher Lagrangian coefficients which lead to a lower reach rate.

\begin{figure}
    \captionsetup[subfigure]{labelformat=empty}
    \centering
    \begin{subfigure}[b]{0.33\linewidth}
        \centering
        \includegraphics[width=\linewidth]{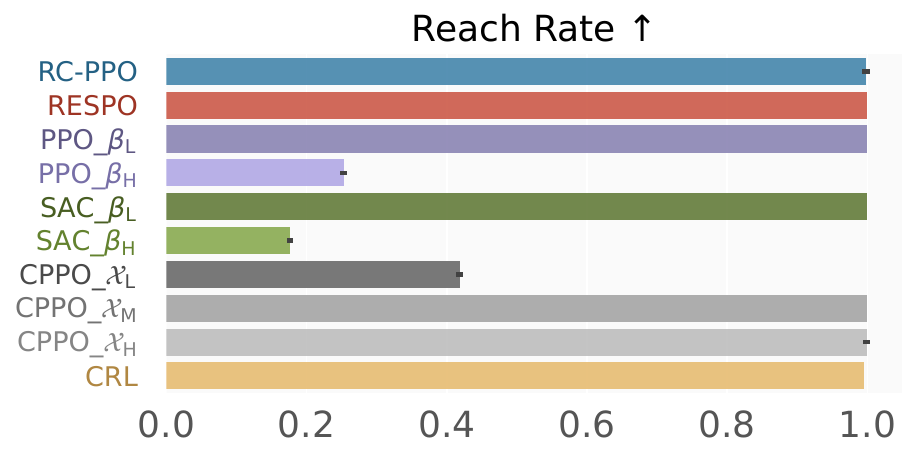}

        \vspace{-2mm}
        \caption{\texttt{Pendulum}}
    \end{subfigure}\hfill \begin{subfigure}[b]{0.33\linewidth}
        \centering
        \includegraphics[width=\linewidth]{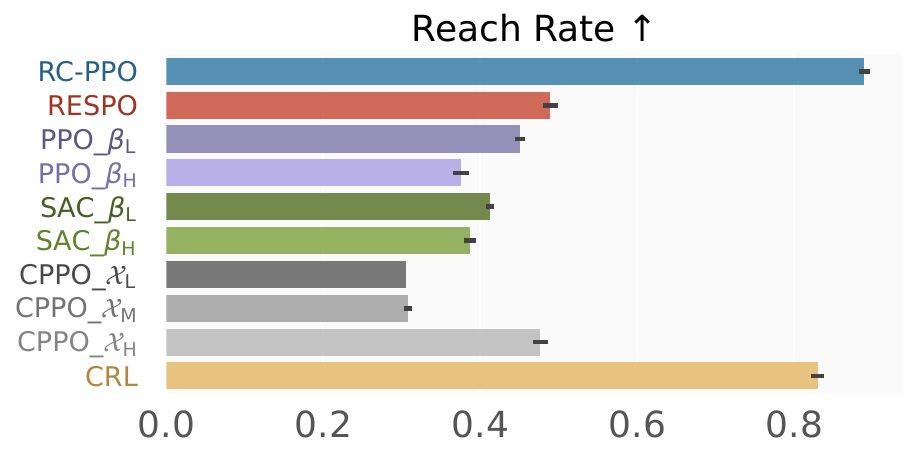}

        \vspace{-2mm}
        \caption{\texttt{Safety Hopper}}
    \end{subfigure}\hfill \begin{subfigure}[b]{0.33\linewidth}
        \centering
        \includegraphics[width=\linewidth]{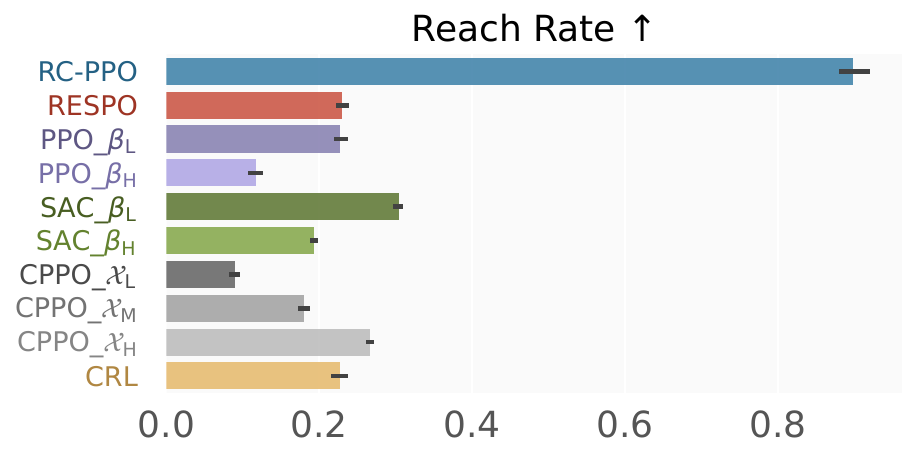}

        \vspace{-2mm}
        \caption{\texttt{WindField}}
    \end{subfigure}

    \vspace{2.2ex}
    \begin{subfigure}[b]{0.33\linewidth}
        \centering
        \includegraphics[width=\linewidth]{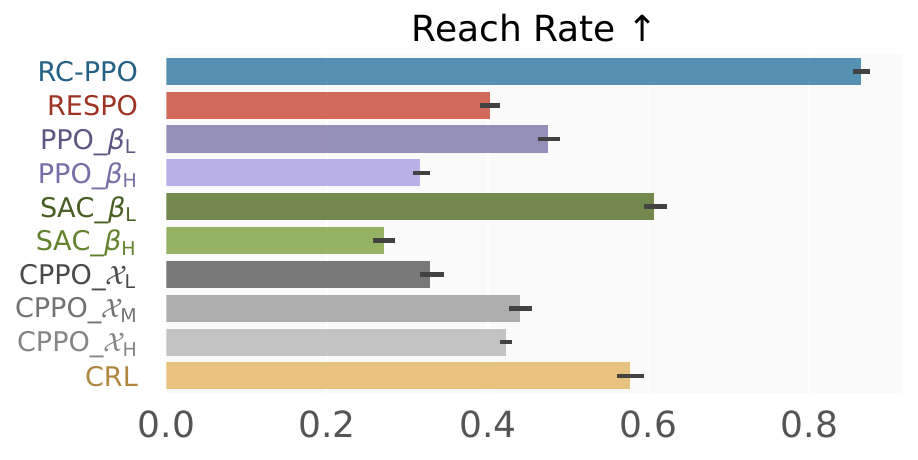}

        \vspace{-2mm}
        \caption{\texttt{PointGoal}}
    \end{subfigure}\hfill \begin{subfigure}[b]{0.33\linewidth}
        \centering
        \includegraphics[width=\linewidth]{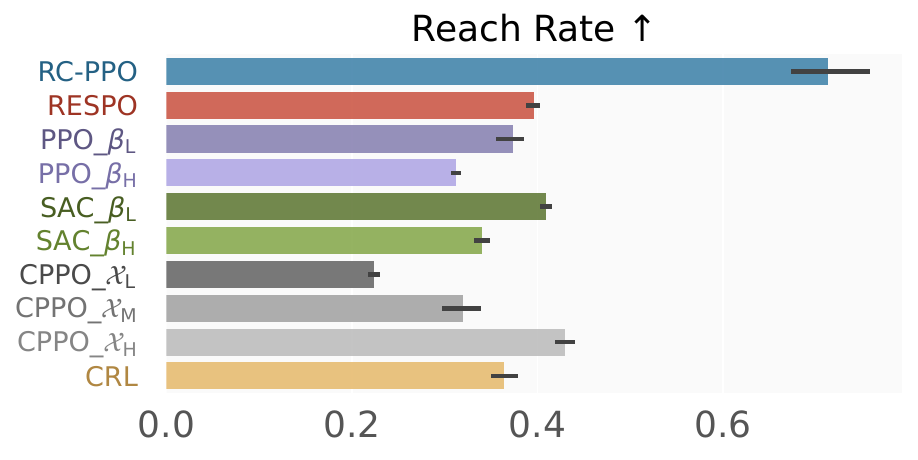}

        \vspace{-2mm}
        \caption{\texttt{Safety HalfCheetah}}
    \end{subfigure}\hfill \begin{subfigure}[b]{0.33\linewidth}
        \centering
        \includegraphics[width=\linewidth]{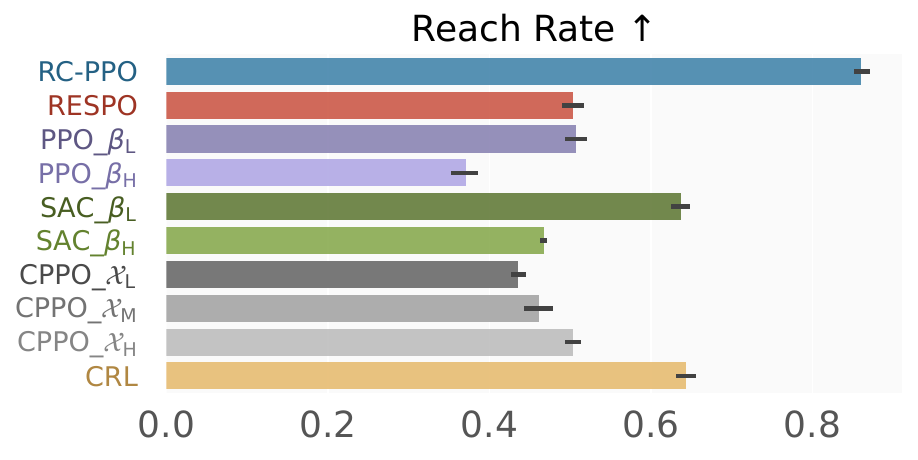}

        \vspace{-2mm}
        \caption{\texttt{FixedWing}}
    \end{subfigure}
    \caption{\textbf{Reach rates under the sparse reward setting.}
RC-PPO consistently achieves the highest reach rates in all benchmark tasks.
    Error bars denote the standard error.
}
    \label{fig:sparse_reach_rate}
    \vspace{-1em}
\end{figure}

\subsection{Comparison under Reward Shaping}

\begin{figure}[htbp]
    \captionsetup[subfigure]{labelformat=empty}
    \centering
    \begin{subfigure}[b]{0.244\linewidth}
        \centering
\includegraphics[width=\linewidth]{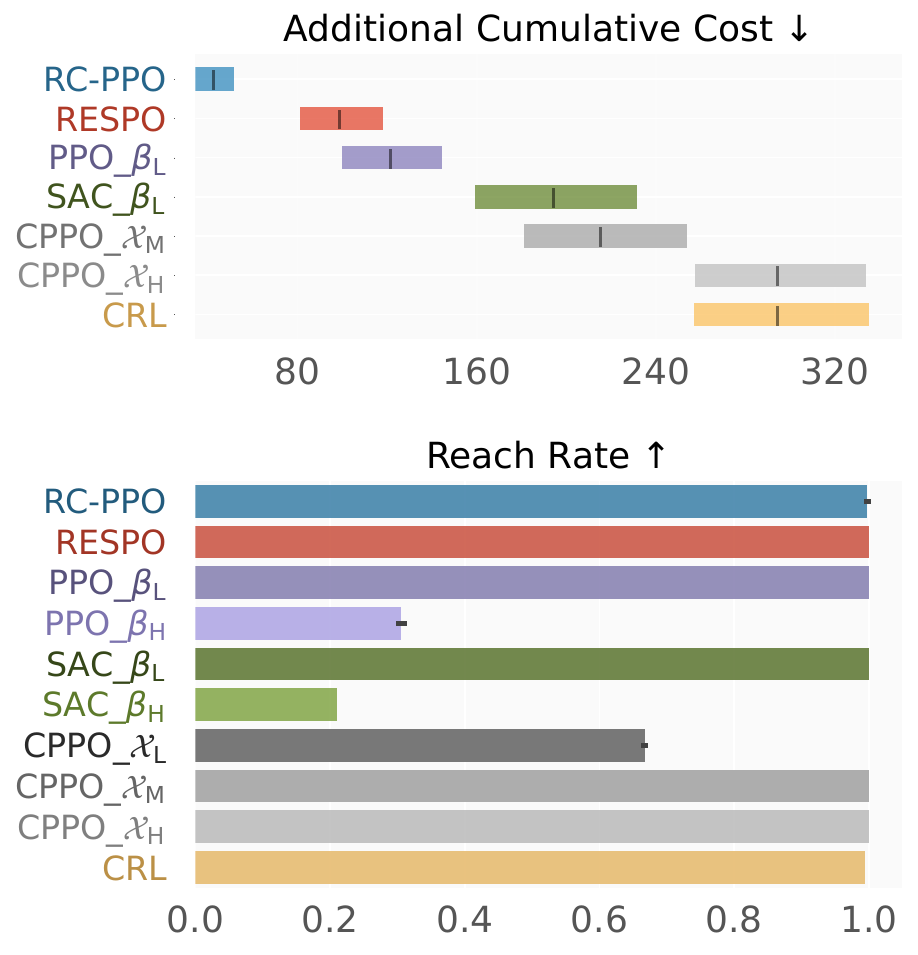}
        \caption{\texttt{Pendulum}}
    \end{subfigure}\hfill \begin{subfigure}[b]{0.244\linewidth}
        \centering
\includegraphics[width=\linewidth]{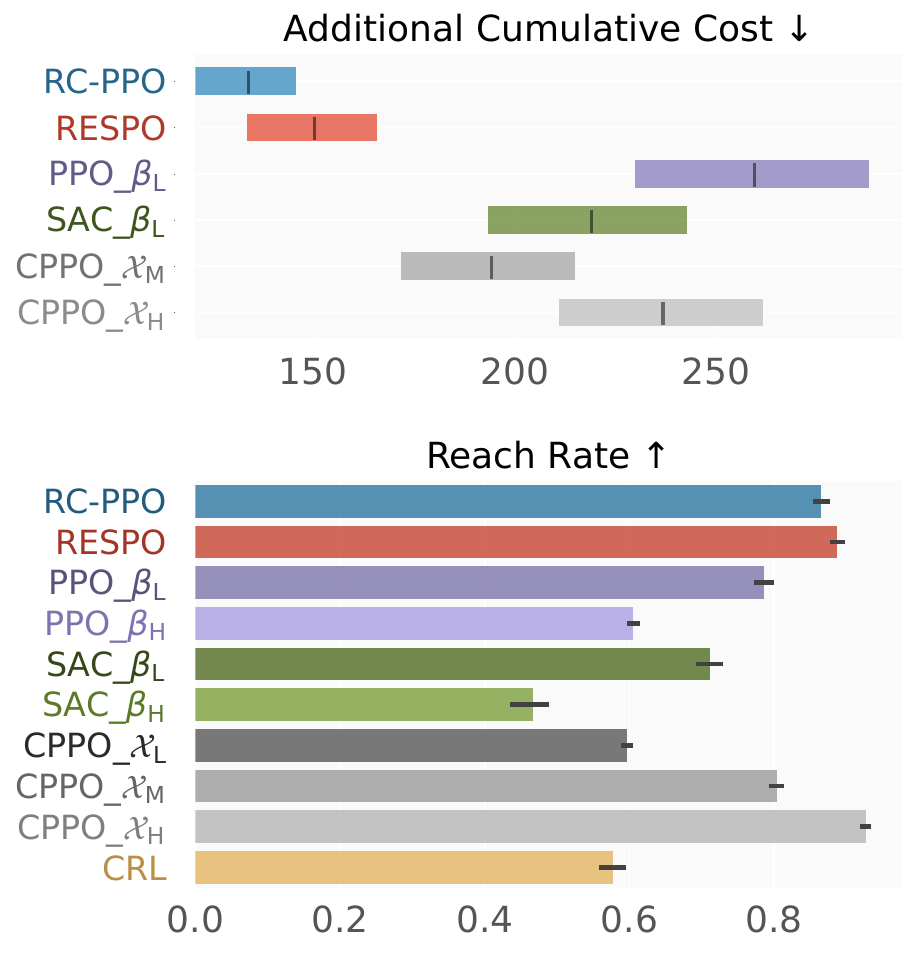}
        \caption{\texttt{PointGoal}}
    \end{subfigure}\hfill \begin{subfigure}[b]{0.244\linewidth}
        \centering
\includegraphics[width=\linewidth]{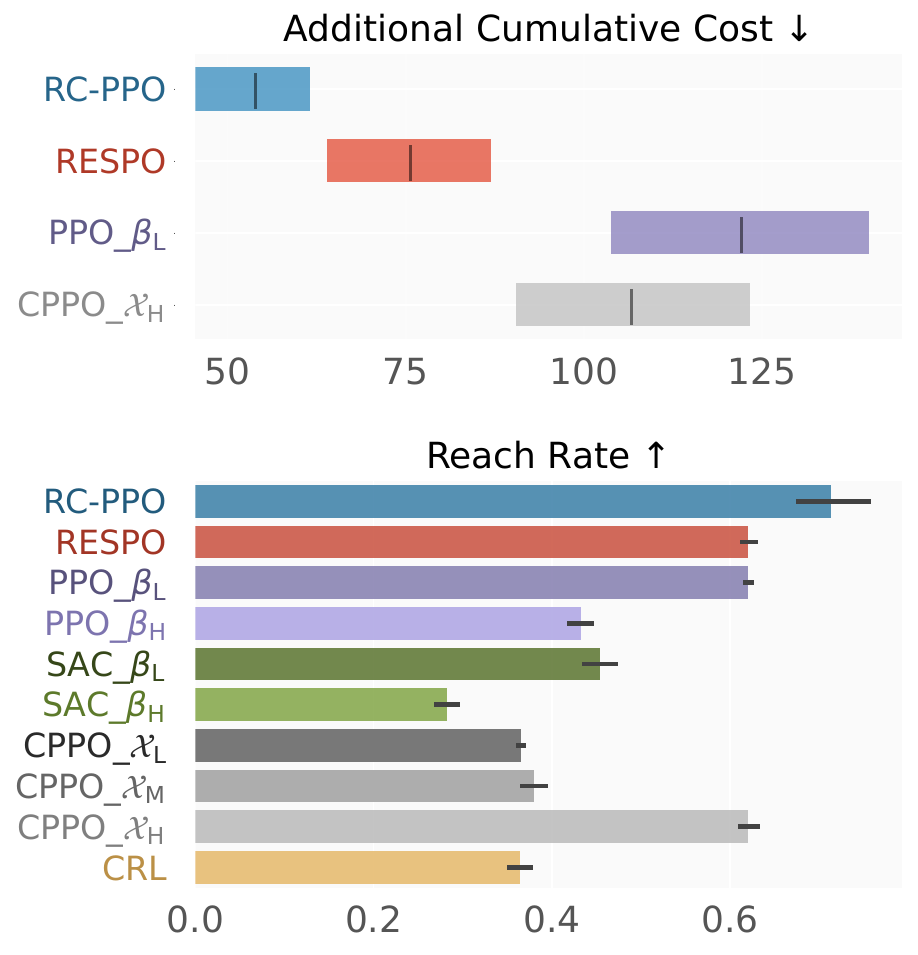}
        \caption{\texttt{Safety HalfCheetah}}
    \end{subfigure}\hfill \begin{subfigure}[b]{0.244\linewidth}
        \centering
\includegraphics[width=\linewidth]{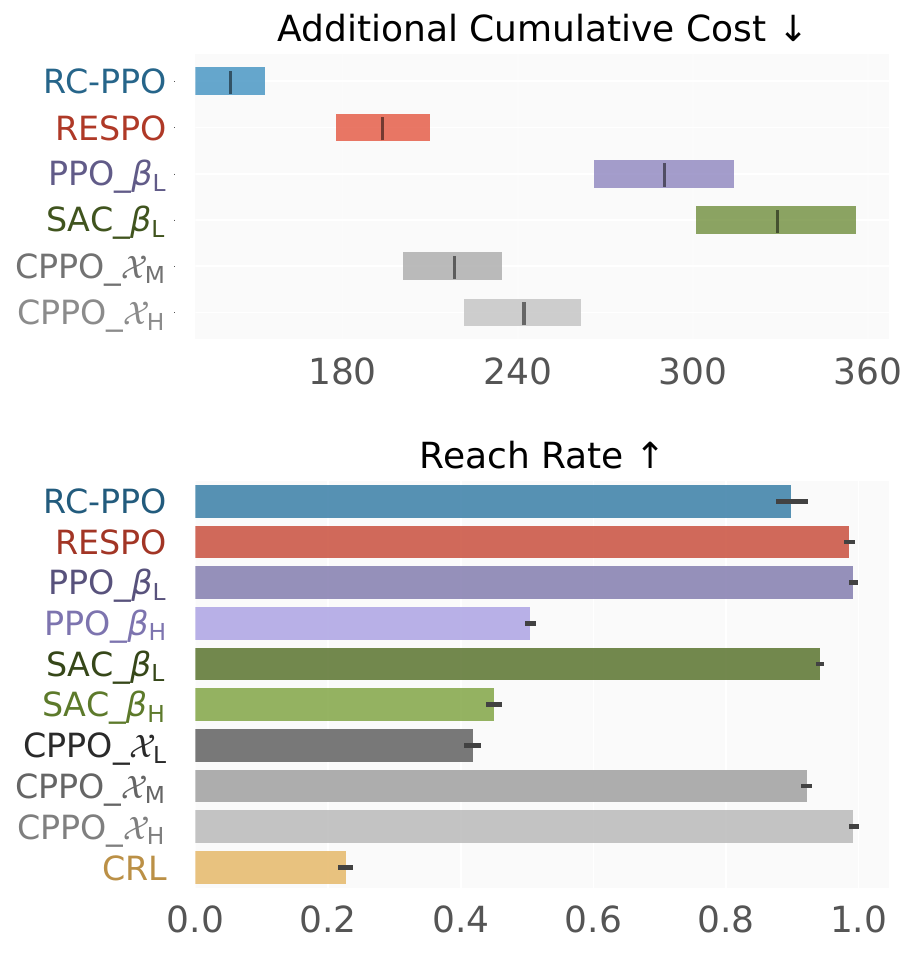}
        \caption{\texttt{WindField}}
    \end{subfigure}
    \caption{
\textbf{Cumulative cost (IQM) and reach rates under reward shaping on four selected benchmarks.}
    RC-PPO achieves significantly lower cumulative costs while retaining comparable reach rates even when compared with baseline methods that use reward shaping.
}
    \label{fig:shaped_experiments}
\end{figure}

\begin{figure}
    \centering
    \begin{subfigure}[t]{0.45\linewidth}
        \centering
        \includegraphics[width=\linewidth]{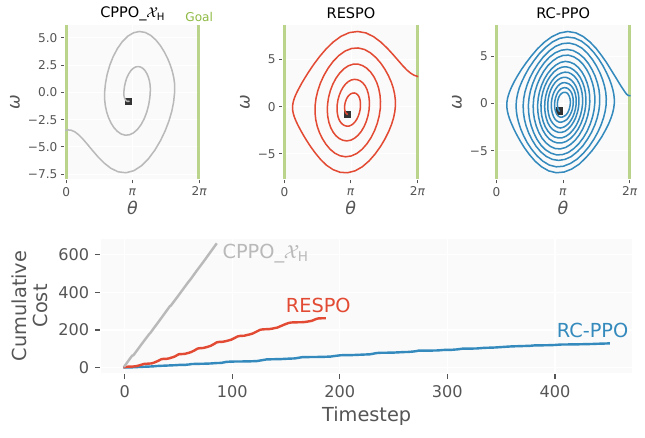}
        \caption{\texttt{Pendulum}}
\end{subfigure}\hfill \begin{subfigure}[t]{0.45\linewidth}
        \centering
        \includegraphics[width=\linewidth]{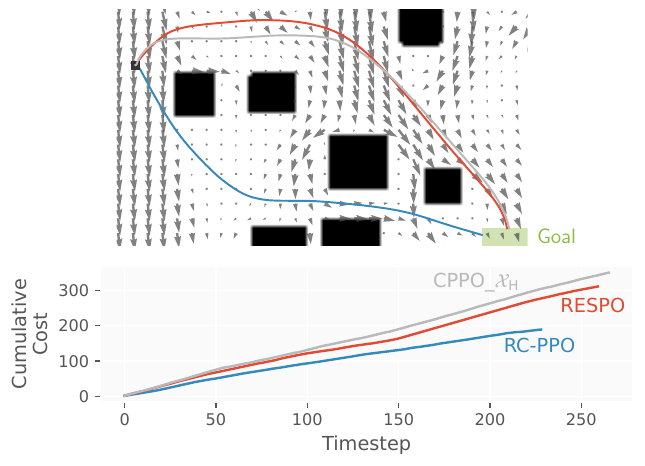}
        \caption{\texttt{WindField}}
\end{subfigure}
    \caption{
    \textbf{Trajectory comparisons.}
    On \texttt{Pendulum}, \AlgName{} learns to perform an extensive energy pumping strategy to reach the goal upright position (\textcolor{ReachGreen2}{green line}), resulting in vastly lower cumulative energy.
    On \texttt{WindField}, \AlgName{} takes advantage instead of fighting against the wind field, resulting in a faster trajectory to the goal region (\textcolor{ReachGreen2}{green box}) that uses lower cumulative energy. The start of the trajectory is marked by $\blacksquare$.
    }
    \label{fig:traj_viz}
    \vspace{-1em}
\end{figure}

Reward shaping is a common method that can be used to improve the performance of RL algorithms, especially in the sparse reward setting \cite{mataric1994reward,ng1999policy}.
To see whether the same conclusions still hold even in the presence of reward shaping,
we retrain the baseline methods but with reward shaping using a distance function-based potential function (see Appendix~\ref{app:implement_detail} for more details).

The results in Figure~\ref{fig:shaped_experiments} demonstrate that 
\AlgName{} remains competitive against the best baseline algorithms in reach rate while achieving significantly lower cumulative costs.
The baseline methods (PPO\_$\beta_\mathrm{H}$, SAC\_$\beta_\mathrm{H}$, CPPO\_$\mathcal{X}_{\mathrm{L}}$) fail to achieve a high reach rate due to the large weights placed on minimizing the cumulative cost.
CRL can reach the goal for simpler environments (\texttt{Pendulum}) but struggles with more complex environments. However, since goal-conditioned methods do not consider minimize cumulative cost, it achieves a higher cumulative cost relative to other methods.
Other baselines focus more on goal-reaching tasks while putting less emphasis on the cost part. As a result, they suffer from higher costs than \AlgName{}.
We can also observe that RESPO achieves lower cumulative cost compared to CPPO\_$\mathcal{X}_{\mathrm{M}}$ which shares the same $\mathcal{X}_{\mathrm{threshold}}$. This is due to RESPO making use of reachability analysis to better satisfy constraints.

To see how \AlgName{} achieves lower cumulative costs, we visualize the resulting trajectories for \texttt{Pendulum} and \texttt{WindField} in Figure~\ref{fig:traj_viz}. For \texttt{Pendulum}, we see that \AlgName{} learns to perform energy pumping to reach the goal in more time but with a smaller cumulative cost. The optimal behavior is opposite in the case of \texttt{WindField}, which contains an additional constant term in the cost to model the energy draw of quadcopters (see Appendix~\ref{app:experiment_detail}). Here, we see that \AlgName{} takes advantage of the wind at the beginning by moving \textit{downwind}, arriving at the goal faster and with less cumulative cost.

We also visualize the learned \AlgName{} policy for different values of $z$ on the \texttt{Pendulum} benchmark
(see Appendix~\ref{app:subsec:viz_pol_z}).
For small values of $z$, the policy learns to minimize the cost, but at the expense of not reaching the goal. For large values of $z$, the policy reaches the goal quickly but at the expense of a large cost. The optimal $z_{\mathrm{opt}}$ found using the learned value function $\tilde{V}_{\hat{\Goalfunc}}^{\pi_\theta}$ finds the $z$ that minimizes the cumulative cost but is still able to reach the goal.

\subsection{Optimal solution of minimum-cost reach-avoid cannot be obtained using CMDP}
Though the previous subsections show the performance benefits of \AlgName{} over existing methods,
this may be due to badly chosen hyperparameters for the baseline methods, particularly in the formulation of the \textit{surrogate} CMDP \eqref{opt:3}.
We thus pose the following question:
\textbf{Can CMDP methods perform well under the right parameters of the surrogate CMDP problem \eqref{opt:3}
?}.

\noindent\textbf{Empirical Study.} To answer this question, we first perform an extensive grid search over both the different coefficients in \eqref{opt:3} and the static Lagrange multiplier for PPO (see \Cref{app:subsec:grid}) and plot the result in \Cref{fig:grid}.
\AlgName{} outperforms the entire Pareto front formed from this grid search, providing experimental evidence that the performance improvements of \AlgName{} stem from having a better problem formulation as opposed to badly chosen hyperparameters for the baselines.

\noindent\textbf{Theoretical Analysis on Simple Example.}
To complement the empirical study, we provide an example of a simple minimum-cost reach-avoid problem where we prove that no choice of hyperparameter leads to the optimal solution in \Cref{app:limitation}.

\begin{figure}
    \centering
    \includegraphics[width=0.65\linewidth]{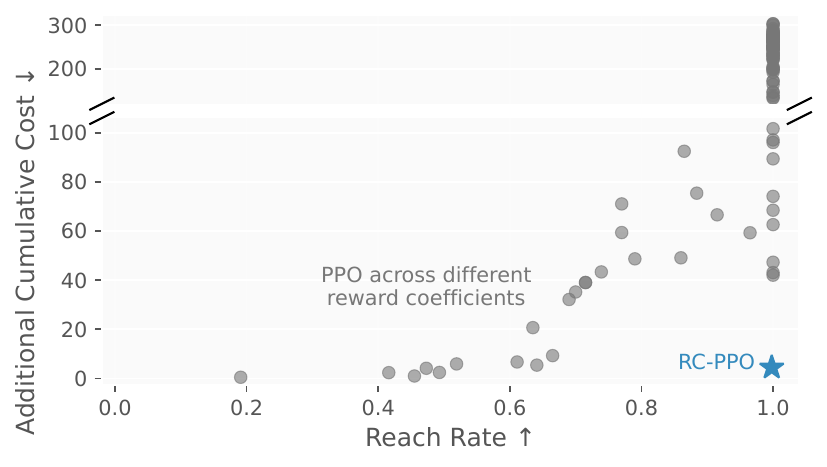}
    \caption{
    \textbf{Pareto front of PPO across different reward coefficients.}
\AlgName{} outperforms the \textit{entire Pareto front} of what can be achieved by varying the reward function coefficients of the surrogate CMDP problem when solved using PPO.
}
    \label{fig:grid}
\end{figure}

\subsection{Robustness to Noise}
Finally, we investigate the robustness to varying levels of control noise in \Cref{app:subsec:noise}.
Even with the added noise, RC-PPO achieves the lowest cumulative cost while maintaining a comparable reach rate to other methods.

\section{Conclusion and Limitations}
We have proposed \AlgName{}, a novel reinforcement learning algorithm for solving minimum-cost reach-avoid problems. We have demonstrated the strong capabilities of \AlgName{} over prior methods in solving a multitude of challenging benchmark problems, where \AlgName{} learns policies that match the reach rates of existing methods while achieving significantly lower cumulative costs. 

However, it should be noted that \AlgName{} is not without limitations. First, the use of augmented dynamics enables folding the safety constraints within the goal specifications through an additional binary state variable. While this reduces the complexity of the resulting algorithm, it also means that two policies that are both unable to reach the goal can have the same value $\tilde{V}^\pi_{g'}$ even if one is unsafe, which can be undesirable. Next, the theoretical developments of \AlgName{} are dependent on the assumptions of deterministic dynamics, which can be quite restrictive as it precludes the use of commonly used techniques for real-world deployment such as domain randomization.
We acknowledge these limitations and leave resolving these challenges as future work.

\begin{ack}
This work was partly supported by the National Science Foundation (NSF) CAREER Award \#CCF-2238030, the MIT Lincoln Lab, and the MIT-DSTA program. Any opinions, findings, conclusions, or recommendations expressed in this publication are those of the authors and don’t necessarily reflect the views of the sponsors.
\end{ack}

\bibliographystyle{unsrt}
\bibliography{references}

\newpage

\appendix

\section{GAE estimator Definition}
\label{app:gae}

Note, however, that the definition of return \eqref{eq:q_fn_discounted} is \textit{different} from the original definition and hence will result in a different equation for the GAE.

To simplify the form of the GAE, we first define a ``reduction'' function $\phi^{(n)}: \mathbb{R}^n \rightarrow \mathbb{R}$ that applies itself recursively to its $n$ arguments, i.e.,
\begin{align*}
    \phi^{(n)}(x_1, x_2, \dots, x_n)
    \coloneqq \phi^{(1)}\left(x_1,\, \phi^{(n-1)}(x_2, \dots, x_n) \right)
\end{align*}
where
\begin{align*}
    \phi^{(1)}(x, y) \coloneqq (1-\gamma)x + \gamma \min \{x, y\}.
\end{align*}
The $k$-step advantage function $\hat{A}^{\pi (k)}_{\hat{\Goalfunc}}$ can then be written as 
\begin{align*}
    \begin{split}
        \hat{A}^{\pi (k)}_{\hat{\Goalfunc}}(\hat{x}_t) = \phi^{(k)} \Big (\hat{\Goalfunc}(\hat{x}_t), \dots , \hat{\Goalfunc}(\hat{x}_{t+k-1}), \tilde{V}^{\pi}_{\hat{\Goalfunc}}(\hat{x}_{t+k}) \Big) - \tilde{V}^{\pi}_{\Goalfunc}(\hat{x}_t).
    \end{split}
\end{align*}
We can then construct the GAE $\hat{A}_{\hat{\Goalfunc}}^{\pi (\GAE)}$ as the $\lambda^k$-weighted sum over the $k$-step advantage functions $\hat{A}^{\pi (k)}_{\hat{\Goalfunc}}$:
Overall, the GAE estimator can be described as 
\begin{align*}
    \hat{A}_{\hat{\Goalfunc}}^{\pi (\GAE)}(\hat{x}_t)
    = \frac{1}{1-\lambda} \sum_{k = 1}^{\infty} \lambda^{k} \hat{A}^{\pi (k)}_{\hat{\Goalfunc}}(\hat{x}_t).
\end{align*}

\section{Equivalence of Problem~\ref{opt:1} and Problem~\ref{opt:2}}
\label{app:equivalence}
The equivalence between the transformed Problem~\ref{opt:2} and the original minimum-cost reach-avoid Problem~\ref{opt:1} can be shown in the following sequence of optimization problems that yield the \textbf{exact same solution} if feasible:

\begin{align} 
&\min_{\pi,T}\quad \sum_{k=0}^T c(x_k, \pi(x_k)) \quad \text{ s.t } \quad g(x_T) \leq 0, \quad \max_{k = 0, \dots, T} h(x_k) \leq 0 \\ 
=& \min_{\pi,T}\quad \sum_{k=0}^T c(x_k, \pi(x_k)) \quad \text{ s.t } \quad g(x_T) \leq 0, \quad I_{(\max_{k = 0, \dots, T} h(x_k) > 0)} \leq 0 \\ 
=& \min_{z_0, \pi, T}\quad z_0 \quad \text{ s.t. } \quad \sum_{k=0}^T c(x_k, \pi(x_k)) \leq z_0, \quad g(x_T) \leq 0,\quad I_{(\max_{k = 0, \dots, T} h(x_k) > 0)} \leq 0 \\ 
=& \min_{z_0, \pi, T}\quad z_0 \quad \text{ s.t. } \quad \max\left( \sum_{k=0}^T c(x_k, \pi(x_k)) - z_0,\; g(x_T),\; I_{(\max_{k = 0, \dots, T} h(x_k) > 0)} \right) \leq 0 \\ 
=& \min_{z_0, \pi, T}\quad z_0 \quad \text{ s.t. } \quad \hat{g}(\hat{x}_T) \leq 0 \\ 
=& \min_{z_0} \quad z_0 \quad \text{ s.t. } \quad \min_\pi \min_T \hat{g}(\hat{x}_T) \leq 0 \\ 
=& \min_{z_0} \quad z_0 \quad \text{ s.t. } \quad \min_\pi \tilde{V}_{\hat{g}}^\pi (\hat{x}_T) \leq 0 \label{deduction:final}
\end{align}

This shows that the minimum-cost reach-avoid Problem~\ref{opt:1} is equivalent to the formulation we solve in this work~(\ref{deduction:final}), which is Problem~\ref{opt:2} in the paper. The formulation of RC-PPO solves~(\ref{deduction:final}) and thus also solves Problem~\ref{opt:1} because they are equivalent.

\section{Optimal Reach Value Function}

\label{appendix:theoretical_analysis}
As shown in \eqref{equ:contraction}, we introduce an additional discount factor into the estimation of $\Tilde{V}^{\pi}_{\hat{\Goalfunc}}$. It will incur imprecision on the calculation of $\Tilde{V}^{\pi}_{\hat{\Goalfunc}}$ defined in \Cref{def:reach_bellman_sto}. In this section, we show that for a large enough discount factor $\gamma < 1$, we could reach unbiased $\Tilde{z}$ in phase two of \textbf{\AlgName{}}.

\begin{theorem}
\label{thm:bell_opt_dis}
We denote $\max_{\hat{x} \in \hat{\Stateset}} \{\hat{\Goalfunc}(\hat{x})\} = G_{max}$ and maximal episode length $T_{max}$. If there exists a positive value $\epsilon$ where 
\begin{equation*}
    \hat{\Goalfunc}(\hat{x}) < 0 \Rightarrow \hat{\Goalfunc}(\hat{x}) < -\epsilon.
\end{equation*}
Then for any $\frac{\gamma^{T_{max}}}{1-\gamma^{T_{max}}} > \frac{G_{max}}{\epsilon}$, for any deterministic policy $\pi$ satisfies \ref{equ:contraction}. If there exists a trajectory under given policy $\pi$ leading to the extended goal region $\hat{\Goalset}$. We have
\begin{equation*}
    \Tilde{V}^{\pi}_{\hat{\Goalfunc}}(\hat{x}) < 0.
\end{equation*}

\end{theorem}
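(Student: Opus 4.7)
The plan is to unroll the discounted reachability Bellman equation \eqref{equ:contraction} along the trajectory that reaches $\hat{\Goalset}$, convert everything into a telescoping geometric sum, and then show that the resulting bound is strictly negative precisely when $\gamma$ satisfies the stated condition.

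Since $\pi$ is deterministic and the underlying dynamics are deterministic, the expectation in \eqref{equ:contraction} collapses and the recursion becomes $\Tilde{V}^{\pi}_{\hat{\Goalfunc}}(\hat{x}_t) = (1-\gamma)\hat{\Goalfunc}(\hat{x}_t) + \gamma \min\{\hat{\Goalfunc}(\hat{x}_t),\, \Tilde{V}^{\pi}_{\hat{\Goalfunc}}(\hat{x}_{t+1})\}$. Applying $\min(a,b)\le b$ yields the one-step upper bound
$$\Tilde{V}^{\pi}_{\hat{\Goalfunc}}(\hat{x}_t) \le (1-\gamma)\hat{\Goalfunc}(\hat{x}_t) + \gamma \Tilde{V}^{\pi}_{\hat{\Goalfunc}}(\hat{x}_{t+1}),$$
and likewise $\Tilde{V}^{\pi}_{\hat{\Goalfunc}}(\hat{x}_t) \le \hat{\Goalfunc}(\hat{x}_t)$ from $\min(a,b)\le a$. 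Let $T^*\le T_{max}$ be a time at which the trajectory enters $\hat{\Goalset}$. Iterating the one-step bound from $t=0$ to $t=T^*$ and summing the geometric factors gives
$$\Tilde{V}^{\pi}_{\hat{\Goalfunc}}(\hat{x}_0) \le (1-\gamma) \sum_{k=0}^{T^*-1} \gamma^k \hat{\Goalfunc}(\hat{x}_k) + \gamma^{T^*} \Tilde{V}^{\pi}_{\hat{\Goalfunc}}(\hat{x}_{T^*}).$$

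Then I would bound each piece separately. The running terms use $\hat{\Goalfunc}(\hat{x}_k)\le G_{max}$, which together with $(1-\gamma)\sum_{k=0}^{T^*-1}\gamma^k = 1-\gamma^{T^*}$ yields the contribution $(1-\gamma^{T^*})G_{max}$. For the terminal term, $\hat{x}_{T^*}\in \hat{\Goalset}$ means $\hat{\Goalfunc}(\hat{x}_{T^*})\le 0$, which by the gap hypothesis strengthens to $\hat{\Goalfunc}(\hat{x}_{T^*}) < -\epsilon$; combined with $\Tilde{V}^{\pi}_{\hat{\Goalfunc}}(\hat{x}_{T^*}) \le \hat{\Goalfunc}(\hat{x}_{T^*})$, this gives $\Tilde{V}^{\pi}_{\hat{\Goalfunc}}(\hat{x}_{T^*}) < -\epsilon$. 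Putting the two pieces together,
$$\Tilde{V}^{\pi}_{\hat{\Goalfunc}}(\hat{x}_0) < (1-\gamma^{T^*})\, G_{max} - \gamma^{T^*}\epsilon.$$
Negativity of the right-hand side rearranges exactly to $\gamma^{T^*}/(1-\gamma^{T^*}) > G_{max}/\epsilon$. Since $x\mapsto x/(1-x)$ is strictly increasing on $(0,1)$ and $\gamma^{T^*}\ge \gamma^{T_{max}}$ (because $T^*\le T_{max}$ and $\gamma\in(0,1)$), the hypothesis $\gamma^{T_{max}}/(1-\gamma^{T_{max}}) > G_{max}/\epsilon$ transfers to give the required inequality, hence $\Tilde{V}^{\pi}_{\hat{\Goalfunc}}(\hat{x}_0)<0$.

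The argument is not technically difficult; the main care points are bookkeeping. First, one must ensure that the telescope is carried out with the correct weighting, so that the coefficients $(1-\gamma)\gamma^k$ on the running terms and the pure $\gamma^{T^*}$ weight on the terminal term assemble into the clean bound $(1-\gamma^{T^*})G_{max} - \gamma^{T^*}\epsilon$. Second, the strict inequality must be preserved all the way through, which is handled by the strict $<-\epsilon$ gap at the goal. Third, one must verify that the monotonicity of $T\mapsto \gamma^T/(1-\gamma^T)$ in the transfer from $T^*$ to $T_{max}$ runs in the direction that makes the hypothesis usable; since $\gamma^{T}$ decreases and $x/(1-x)$ is monotone increasing in $x$, the composition decreases in $T$, so the lower bound at $T_{max}$ is the worst case and covers every $T^*\le T_{max}$.
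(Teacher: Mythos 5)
Your proof is correct and follows essentially the same route as the paper's: unroll the discounted Bellman recursion along the goal-reaching trajectory via $\min(a,b)\le b$, bound the running terms by $G_{max}$ to get $(1-\gamma^{T^*})G_{max}$, and use the $\epsilon$-gap at the terminal state to conclude negativity. If anything, your write-up is slightly more careful than the paper's, which contains a sign typo ($+\gamma^{T}\epsilon$ where $-\gamma^{T}\epsilon$ is meant) and omits the explicit monotonicity step transferring the hypothesis from $T_{max}$ to $T^{*}\le T_{max}$.
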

The proof for Theorem~\ref{thm:bell_opt_dis} is provided in Appendix~\ref{app:proof_bellman_opt_dis}.

\section{Proofs}
\label{app:proof}

\subsection{Proof for Theorem~\ref{thm:equ}}
\label{app:proof_equ}

\begin{proof}\label{prof:equ}
    We separately consider three elements in augmented state $(x_T, y_T, z_T)$. First, note that \ref{opt:1b} holds if and only if $x_T \in \Goalset$. For the second element $y$, from the definition of the augmented dynamics \ref{equ:hyb_evo}, it holds that 
    \begin{equation}
        y_T = \max_{i \in \{0, \dots, T\}}\mathbbm{I}_{x_i \in \Failset}
    \end{equation}
    As a result \ref{opt:1c} holds if and only if $y_T = -1$.
    For the third element $z$, note that $z_T = z_0 - \sum_{k = 0}^{T-1} c(x_k, u(x_k))$. Hence, $z_T \ge 0$ if and only if 
    $z_0 \ge \sum_{k = 0}^{T-1} c(x_k, u(x_k))$. 
\end{proof}

\subsection{Proof for Property~\ref{property:reach_bellman}}
\label{app:proof_bellman_reach}

\begin{proof}
    \label{prof:bellman_reach}
    From Definition \ref{property:reach_bellman}, we know 
    \begin{align*}
        \Tilde{V}^{\pi}_{\hat{\Goalfunc}}(\hat{x}) &= \min_{t \in \mathbb{N}} \hat{\Goalfunc}(\hat{x}_t \mid \hat{x}_0 = \hat{x}) \\
        &= \min \{ \hat{\Goalfunc}(\hat{x}), \min_{t \in \mathbb{N}^{+}} \hat{\Goalfunc}(\hat{x}_t \mid \hat{x}_0 = 
        \hat{x})\} \\
        &= \min \{\hat{\Goalfunc}(\hat{x}), \tilde{V}_{\Goalfunc}^{\pi}(\hat{x}_{t+1})\}
    \end{align*}
\end{proof}

\subsection{Proof for Theorem~\ref{thm:policy_gradient}}
\label{app:proof_policy_gradient}

We first derive the state value function in a recursive form similar as \cite{sutton2018reinforcement}
\begin{proof}
    \label{prof:policy_gradient}
    \begin{align*}
        \nabla_\theta \Tilde{V}_{\hat{\Goalfunc}}^{\pi_\theta}(\hat{x}) = & \nabla_\theta\left(\sum_{u \in \Actionset} \pi_\theta(u \mid \hat{x}) \Tilde{Q}_{\hat{\Goalfunc}}^{\pi_\theta}(\hat{x}, u)\right) \\
        = & \sum_{u \in \Actionset} \Bigg (\nabla_\theta \pi_\theta(u \mid \hat{x}) \Tilde{Q}_{\hat{\Goalfunc}}^{\pi_\theta}(, u)+\pi_\theta(u \mid \hat{x}) \nabla_\theta \Tilde{Q}_{\hat{\Goalfunc}}^{\pi_\theta}(\hat{x}, u) \Bigg) \\
        = & \sum_{u \in \Actionset} \Bigg (\nabla_\theta \pi_\theta(u \mid \hat{x}) \Tilde{Q}_{\hat{\Goalfunc}}^{\pi_\theta}(\hat{x}, u) \\
        & \quad \qquad + \pi_\theta(u \mid \hat{x}) \nabla_\theta \min \{\hat{\Goalfunc}(\hat{x}), \tilde{V}_{\Goalfunc}^{\pi}(\hat{x}')\}\Bigg ) \\
        = & \sum_{u \in \Actionset}\Bigg (\nabla_\theta \pi_\theta(u \mid \hat{x}) \Tilde{Q}_{\hat{\Goalfunc}}^{\pi_\theta}(\hat{x}, u) \\
        & \quad \qquad + \pi_\theta(u \mid \hat{x}) \mathbbm{1}_{\hat{\Goalfunc}(\hat{x}) > \tilde{V}_{\Goalfunc}^{\pi_\theta}(\hat{x}')} \nabla_\theta \tilde{V}_{\Goalfunc}^{\pi_\theta}(\hat{x}') \Bigg )
    \end{align*}
where $\hat{x}' = \hat{\Trans}(\hat{x}, u)$

Next, we consider unrolling $\tilde{V}_{\Goalfunc}^{\pi_\theta}(\hat{x}')$ under Reachability MDP in Definition~\ref{def:reach_markov}. We define $\operatorname{Pr}(\hat{x} \rightarrow \hat{x}^{\dagger}, k, \pi_{\theta})$ as the probability of transitioning from state $\hat{x}$ to $\hat{x}^{\dagger}$ in $k$ steps under policy $\pi_{\theta}$ in \ref{def:reach_markov}. Note that $\mathbbm{1}_{\hat{\Goalfunc}(\hat{x}) > \tilde{V}_{\Goalfunc}^{\pi_\theta}(\hat{x}')}$ is absorbed using the absorbing state in \ref{def:reach_markov}. Then we can get 

\begin{align*}
    \nabla_\theta \Tilde{V}_{\hat{\Goalfunc}}^{\pi_\theta}(\hat{x}) & = \sum_{\hat{x}^{\dagger} \in \hat{\Stateset}}\left(\sum_{k=0}^{\infty} \operatorname{Pr}\big (\hat{x} \rightarrow \hat{x}^{\dagger}, k, \pi \big ) \right) \sum_{u \in \Actionset} \nabla \pi_{\theta}(u \mid \hat{x}^{\dagger}) \Tilde{Q}_{\hat{\Goalfunc}}^{\pi_\theta}(\hat{x}^{\dagger}, u) \\
    & \propto \mathbb{E}_{\hat{x}' \sim d'_{\pi}(\hat{x}), u \sim \pi_\theta}\left[\tilde{Q}^{\pi_\theta}\left(\hat{x}', u\right) \nabla_\theta \ln \pi_\theta\left(u \mid \hat{x}'\right) \right]
\end{align*}

\end{proof}
\subsection{Proof for Theorem~\ref{thm:bell_opt_dis}}
\label{app:proof_bellman_opt_dis}

\begin{proof}
    \label{prof:bellman_opt_dis}
    Consider trajectory $\{\hat{x}_0, \dots, \hat{x}_T\}$ where $\hat{x}_T \in \hat{\Goalset}$. We consider the worst-case scenario where $\hat{\Goalfunc}(\hat{x}_t) = g_{max}$ for $t \in \{0, \dots, T-1\}$. Then 
    \begin{align*}
        \Tilde{V}^{\pi}(\hat{x}_0) &= (1-\gamma) \hat{\Goalfunc}(\hat{x}_0) + \gamma \min\{\Tilde{V}^{\pi}(\hat{x}_1), \hat{\Goalfunc}(\hat{x}_1)\} \\
        &\le (1-\gamma) g_{max} + \gamma \Tilde{V}^{\pi}(\hat{x}_1) \\
        &\le (1-\gamma) g_{max} + \gamma ((1-\gamma) g_{max} + \gamma \Tilde{V}^{\pi}(\hat{x}_1)) \\
        & \le \sum_{i = 0}^{T-1} \gamma^i (1-\gamma) g_{max} + \gamma^T \Tilde{V}^{\pi}(\hat{x}_T) \\
        & < (1 - \gamma^T) g_{max} + \gamma^T \epsilon \\
        & < 0
    \end{align*}
\end{proof}

\section{Convergence Guarantee on an Actor-Critic Version of Our Method}
\label{app:convergence_proof}

In this section, we provide the convergence proof of phase one of our method under the actor-critic framework. Notice that similar to Bellman equation \eqref{equ:contraction} for $\Tilde{V}_{\hat{\Goalfunc}}^{\pi}$. We could also derive the Bellman equation for $\Tilde{Q}_{\hat{\Goalfunc}}^{\pi}$ as 
\begin{align*}
    \label{equ:contraction_q}
    \begin{split}
        \tilde{Q}_{\hat{\Goalfunc}}^{\pi}(\hat{x}_t, u_t) = (1-\gamma) \hat{\Goalfunc}(\hat{x_t}) + \gamma \mathbb{E}_{\hat{x}_{t+1} \sim \tau, u_{t+1} \sim \pi}[\min \{\hat{\Goalfunc}(\hat{x_t}), \tilde{Q}_{\Goalfunc}^{\pi}(\hat{x}_{t+1}, u_{t+1})\}]
    \end{split}
\end{align*}

Next, we show our method under the actor-critic framework without GAE estimator in Algorithm~\ref{alg:rc_ppo_actor}

\begin{algorithm}[H]
    \caption{\AlgName{}\ (Actor Critic)}
    \label{alg:rc_ppo_actor}
    \begin{algorithmic}[1]
        \Require{Initial policy parameter $\theta_0$, Q function parameter $\omega_0$, horizon T, convex projection operator $\Gamma_{\Theta}$, and value function learning rate $\beta_{1}(k)$, policy learning rate $\beta_{2}(k)$}
        \For{k = 0, 1, \dots}
            \For{t = 0 \textbf{to} T-1}
                \State Sample trajectories $\tau_t : \{\hat{x}_t, u_t, \hat{x}_{t+1}\}$ 
                \State \textbf{Critic update:} $\omega_{k+1} = \omega_{k} - \beta_{1}(k) \nabla_\omega \Tilde{Q}_{\hat{\Goalfunc}}\left(\hat{x}_t, u_t ;\omega_k \right) \cdot$ 
                \State $\left[\Tilde{Q}_{\hat{\Goalfunc}}\left(\hat{x}_t, u_t ; \omega_k \right)-\left((1-\gamma) \hat{\Goalfunc}(\hat{x}_t)+\gamma \min \left\{\hat{\Goalfunc}(\hat{x}_t), \Tilde{Q}_{\hat{\Goalfunc}}\left(\hat{x}_{t+1}, u_{t+1} ; \omega_k \right)\right\}\right)\right]$
                \State \textbf{Actor Update:} $\theta_{k+1} = \Gamma_{\Theta}\left(\theta_k+\beta_2(k) \Tilde{Q}_{\hat{\Goalfunc}}\left(\hat{x}_t, u_t ;\omega_k \right)\nabla_\theta \log \pi_\theta\left(u_t \mid \hat{x}_t\right)\right)$
            \EndFor
        \EndFor \\
        \Return parameter $\theta, \omega$
    \end{algorithmic}
\end{algorithm}

In this algorithm, the $\Gamma_{\Theta}(\theta)$ operator projects a vector $\theta \in \mathbb{R}^k$ to the closest point in a compact and convex set $\Theta \subset \mathbb{R}^k$, i.e., $\Gamma_{\Theta}(\theta) = \arg \min_{\theta' \in \Theta} \Vert \theta' - \theta\Vert^2$. 

Next, we provide the convergence analysis for Algorithm~\ref{alg:rc_ppo_actor} under the following assumptions.
\begin{assumption}{(Step Sizes)}
    \label{assumption:step}
    The step size schedules $\{\beta_{1}(k)\}$ and $\{\beta_{2}(k)\}$ have below properties:
    \begin{align*}
        & \sum_{k} \beta_{1}(k) = \sum_{k} \beta_{2}(k) = \infty \\
        & \sum_{k} \beta_{1}(k)^2, \sum_{k} \beta_{2}(k)^2 < \infty \\
        & \beta_{2}(k) = o(\beta_{1}(k))
    \end{align*}
\end{assumption}
\begin{assumption}{(Differentiability and and Lipschitz Continuity)}
    \label{assumption:differentiable}
    For any state and action pair $(\hat{x}, u)$, $\Tilde{Q}_{\hat{\Goalfunc}}(\hat{x}, u; \omega)$ and $\pi(\hat{x}; \theta)$ are continuously differentiable in $\omega$ and $\theta$. Furthermore, for any state and action pair $(\hat{x}, u)$, $\nabla_{\omega} \Tilde{Q}_{\hat{\Goalfunc}}(\hat{x}, u; \omega)$ and $\pi(\hat{x}; \theta)$ are Lipschitz function in $\omega$ and $\theta$.
\end{assumption}
Also, we assume that $\Stateset$ and $\Actionset$ are finite and bounded and the horizon $T$ is also bounded by $T_{\max}$, then the cost function $c$ can be bounded by $C_{\max}$ and $\Goalfunc$ can be bounded within $G_{\max}$. We can limit the space of cost upper bound $z \in [-G_{\max}, T \cdot C_{\max}]$ instead of $\mathbb{R}$. This is due to $\hat{\Goalfunc}(\hat{x}) = -z$ for $z \le -G_{\max}$. Next, we could do discretization on $[-G_{\max}, T \cdot C_{\max}]$ and cost function $c$ to make the augmented state set $\hat{\Stateset}$ finite and bounded. 

With the above assumptions, we can provide a convergence guarantee for Algocrithm~\ref{alg:rc_ppo_actor}.
\begin{theorem}
    Under Assumptions \ref{assumption:step} and \ref{assumption:differentiable}, the policy update in Algorithm~\ref{alg:rc_ppo_actor} converge almost surely to a locally optimal policy.
\end{theorem}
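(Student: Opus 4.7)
The plan is to invoke Borkar's two-timescale stochastic approximation framework. Since Assumption~\ref{assumption:step} enforces $\beta_2(k) = o(\beta_1(k))$, the critic evolves on the fast timescale while the actor evolves on the slow timescale. From the actor's perspective the critic is essentially at its stationary value $\omega^*(\theta)$, while from the critic's perspective the policy parameter $\theta$ is quasi-static. The argument then splits into (i) convergence of the critic for fixed $\theta$, and (ii) convergence of the actor assuming the critic has tracked its stationary value.

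First I would analyze the critic on the fast timescale. With $\theta$ held fixed, the critic update is a sample-based TD iteration whose target is the reachability Bellman operator for $\tilde Q^{\pi_\theta}_{\hat g}$ under the discount $\gamma$ in \eqref{equ:contraction}. Because $\hat{\Stateset}$ and $\Actionset$ are (after the discretization described just before the theorem) finite and bounded, and because the $\min$-Bellman operator was shown in \cite{hsu2021safety} to be a $\gamma$-contraction in the sup norm, it has a unique fixed point $\tilde Q^{\pi_\theta}_{\hat g}$. Writing the update as $\omega_{k+1} = \omega_k + \beta_1(k)\big(h(\omega_k,\theta_k) + M_{k+1}\big)$ where $M_{k+1}$ is a martingale difference with bounded second moments (both follow from boundedness of $\hat g$, $c$, and the feature vectors under Assumption~\ref{assumption:differentiable}), the standard ODE method (e.g., Borkar Ch.~2) yields $\omega_k \to \omega^*(\theta_k)$ almost surely.

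Next I would analyze the actor on the slow timescale. Given that the critic has tracked $\omega^*(\theta)$, substituting $\tilde Q(\cdot,\cdot;\omega^*(\theta))$ into the actor update turns it into projected stochastic gradient ascent on $\tilde V^{\pi_\theta}_{\hat g}$ by \Cref{thm:policy_gradient}. The projection $\Gamma_\Theta$ onto the compact convex set $\Theta$ keeps $\theta_k$ bounded, the gradient $\nabla_\theta \log \pi_\theta(u\mid\hat x)$ is Lipschitz by Assumption~\ref{assumption:differentiable}, and the summability conditions on $\beta_2$ are given in Assumption~\ref{assumption:step}. A standard Kushner--Clark argument for projected stochastic approximation then shows that $\theta_k$ converges almost surely to the stationary set of the projected ODE
\begin{equation*}
    \dot\theta \;=\; \Gamma_\Theta\!\left(\nabla_\theta \tilde V^{\pi_\theta}_{\hat g}\right),
\end{equation*}
whose equilibria are precisely the KKT points of maximizing $\tilde V^{\pi_\theta}_{\hat g}$ over $\theta\in\Theta$, i.e.\ locally optimal policies.

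The main obstacle I expect is the coupling between the two timescales: one must verify that the slow drift of $\theta_k$ does not disturb the fast convergence of the critic, which requires showing that $\omega^*(\theta)$ is Lipschitz in $\theta$ (this follows from the contraction of the Bellman operator combined with the Lipschitz dependence of the operator on $\pi_\theta$, itself a consequence of Assumption~\ref{assumption:differentiable}), and that the sampling distribution of $(\hat x_t, u_t)$ used in the actor update is consistent with the stationary distribution $d'_\pi$ appearing in \Cref{thm:policy_gradient} up to an asymptotically vanishing bias. A secondary technical wrinkle is the non-smoothness of the $\min\{\cdot,\cdot\}$ operator inside the Bellman target; since $\min$ is Lipschitz this does not disturb the contraction argument, and the ODE limit for the actor can be interpreted, if necessary, as a differential inclusion on the measure-zero set where the min is attained at both arguments.
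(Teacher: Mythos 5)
Your proposal follows essentially the same route as the paper's proof, which is itself a sketch deferring to Theorem 2 of the reachability-constrained RL work it cites: a two-timescale stochastic-approximation argument in which the critic converges on the fast timescale via the $\gamma$-contraction of the $\min$-Bellman operator, and the actor then converges on the slow timescale to a locally optimal stationary point. Your version simply fills in the standard Borkar/Kushner--Clark details that the paper leaves to the cited reference.
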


\begin{proof}

The proof follows from the proof of Theorem 2 in \cite{yu2022reachability}, differing only in whether an update exists for the Lagrangian multiplier. We provide a proof sketch as 
\begin{itemize}
    \item First, we prove that the critic parameter almost surely converges to a fixed point $\omega^*$.
\end{itemize}
This step is guaranteed by the assumption of finite and bounded state and action set and Assumption~\ref{assumption:step}. The $\gamma$-contraction property of the following operator 
\begin{align}
    \begin{split}
        \mathcal{B}[\Tilde{Q}_{\hat{\Goalfunc}}(\hat{x}, u)] = & (1-\gamma) \hat{\Goalfunc}(\hat{x}) + \gamma \mathbb{E}_{\hat{x}' \sim \tau, u \sim \pi}[\min \{\hat{\Goalfunc}(\hat{x}), \tilde{Q}_{\Goalfunc}^{\pi}(\hat{x}', u)\}]
    \end{split}
\end{align}
is also proved in Lemma B.1 in \cite{yu2022reachability} to make sure the convergence of the first step.
\begin{itemize}
    \item Second, due to the fast convergence of $\omega^*$, we can show policy paramter $\theta$ converge almost surely to a stationary point $\theta^*$ which can be further proved to be a locally optimal solution.
\end{itemize}
We refer to \cite{yu2022reachability} for proof details.

\end{proof}

\section{Implementation Details of Algorithms}
\label{app:implement_detail}

In this section, we will provide more details about CMDP-based baselines (different between optimization goal with multiple constraints) and other hyperparameter settings like $\mathcal{X}_{\mathrm{threshold}}$.

\subsection{CMDP-based Baselines}
In this section, we will clarify the optimization target for CPPO and RESPO under CMDP formulation of both hard and soft constraints. Recall that our formulation of CMDP is 
\begin{mini!}[2]
{\pi}{\mathbb{E}_{x_t, u_t \sim d_\pi}\sum_{t} \big [-\gamma^{t} r(x_t, u_t) \big ] \label{opt:4a}}{\label{opt:4}}{}
\addConstraint{\mathbb{E}_{x_t, u_t \sim d_\pi}\sum_{t} \big [\gamma^{t} \mathbbm{1}_{x_t \in \Failset} \times C_{fail} \big ] \le 0}{\label{opt:4b}}
\addConstraint{\mathbb{E}_{x_t, u_t \sim d_\pi}\sum_{t} \big [\gamma^{t} c(x_t, u_t) \big ] \le \mathcal{X}_{\mathrm{threshold}}}{\label{opt:4c}}
\end{mini!}
We then denote 
\begin{align*}
    V^{\pi}_r(x_t) \coloneqq \mathbb{E}_{x_t, u_t \sim d_\pi}\sum_{t} \big [\gamma^{t} r(x_t, u_t) \big ] \\
    V^{\pi}_f(x_t) \coloneqq \mathbb{E}_{x_t, u_t \sim d_\pi}\sum_{t} \big [\gamma^{t} \mathbbm{1}_{x_t \in \Failset} \times C_{cost} \big ] \\
    V^{\pi}_c(x_t) \coloneqq \mathbb{E}_{x_t, u_t \sim d_\pi}\sum_{t} \big [\gamma^{t} c(x_t, u_t) \big ]
\end{align*}
The optimization goal formulation for CPPO is as follows:
\begin{equation*}
\min _\pi \max _\lambda\left(L(\pi, \lambda)=-V_{r}^\pi(x)+\lambda_{c} \cdot\left(V_{c}^\pi(x)-\mathcal{X}_{threshold} \right)+\lambda_{f} \cdot V_{f}^\pi(x)\right)
\end{equation*}
In this formulation, the soft constraint $V_{c}^{\pi}$ has the same priority as the hard constraint $V_{f}^{\pi}$. This leads to a potential imbalance between soft constraints and hard constraints. Instead, the optimization goal for RESPO is as follows:
\begin{align*}
    \begin{split}
    \min _\pi \max _\lambda L(\pi, \lambda)= & \big (-V_{r}^\pi(x)+\lambda_{c} \cdot\left(V_{c}^\pi(x)-\mathcal{X}_{threshold} \right) \\
    & + \lambda_f \cdot V_{f}^\pi(x) \big ) \cdot (1-p(x))+ p(x) \cdot V_{f}^\pi(x)
    \end{split}
\end{align*}
where $p(x)$ denotes the probability of entering the unsafe region $\Failset$ start from state $x$. It is called the reachability estimation function (REF). This formulation prioritizes the satisfaction of hard constraints but still suffers from balancing soft constraints and reward terms. 

\subsection{Hyperparameters}

We first clarify how we set proper $\mathcal{X}_{\mathrm{threshold}}$ for each environment. First, we will run our method \AlgName{} and calculate the average cost, we denote it as $c_{average}$. We set $\mathcal{X}_{\mathrm{low}} = \frac{c_{average}}{10}$, $\mathcal{X}_{\mathrm{medium}} = \frac{c_{average}}{3}$ and $\mathcal{X}_{\mathrm{high}} = c_{average}$. For static lagrangian multiplier $\beta$, we set $\beta_{\mathrm{lo}} = 0.1$ and $\beta_{\mathrm{hi}} = 10$. Also, we set $C_{fail} = 20$ in every environment.

Note that CRL is an off-policy algorithm, while \AlgName{} and other baselines are on-policy algorithms. We provide Table~\ref{table:hyper_on} showing hyperparameters for on-policy algorithms and Table~\ref{table:hyper_off} showing hyperparameters for off-policy algorithm (CRL).

\begin{table}[htbp]
\centering
\caption{Hyperparameter Settings for On-policy Algorithms}
\label{table:hyper_on} 
\begin{tabular}{ll}
    \toprule
    Hyperparameters for On-policy Algorithms                                &  Values \\ 
    \midrule
    \textbf{On-policy parameters} \\
    Network Architecture                                                    & MLP  \\
    Units per Hidden Layer                                                  & 256  \\
    Numbers of Hidden Layers                                                & 2  \\
    Hidden Layer Activation Function                                        & tanh  \\
    Entropy coefficient                                                     & Linear Decay 1e-2 $\rightarrow$ 0 \\
    Optimizer                                                               & Adam  \\
    Discount factor $\gamma$                                                & 0.99 \\
    GAE lambda parameter                                                    & 0.95  \\
    Clip Ratio                                                              & 0.2  \\
    Actor Learning rate                                                     & Linear Decay 3e-4 $\rightarrow$ 0 \\
    Reward/Cost Critic Learning rate                                        & Linear Decay 3e-4 $\rightarrow$ 0 \\
    \midrule
    \textbf{RESPO specific parameters} \\
    REF Output Layer Activation Function                                    & sigmoid  \\
    Lagrangian multiplier Output Layer Activation function                  & softplus  \\
    Lagrangian multiplier Learning rate                                     & Linear Decay 5e-5 $\rightarrow$ 0 \\
    REF Learning Rate                                                       & Linear Decay 1e-4 $\rightarrow$ 0 \\
    \midrule
    \textbf{CPPO specific parameters} \\
    $K_P$                                                                   & 1 \\
    $K_I$                                                                   & 1e-4 \\
    $K_D$                                                                   & 1 \\
    \bottomrule
\end{tabular} 
\end{table}

\begin{table}[htbp]
\centering
\caption{Hyperparameter Settings for Off-policy Algorithms}
\label{table:hyper_off} 
\begin{tabular}{ll}
    \toprule
    Hyperparameters for Off-policy Algorithms                               &  Values \\ 
    \midrule
    \textbf{Off-policy parameters} \\
    Network Architecture                                                    & MLP  \\
    Units per Hidden Layer                                                  & 256  \\
    Numbers of Hidden Layers                                                & 2  \\
    Hidden Layer Activation Function                                        & tanh  \\
    Entropy target                                                          & -2 \\
    Optimizer                                                               & Adam  \\
    Discount factor $\gamma$                                                & 0.99 \\
    Actor Learning rate                                                     & Linear Decay 3e-4 $\rightarrow$ 0 \\
    Critic Learning rate                                                    & Linear Decay 3e-4 $\rightarrow$ 0 \\
    Actor Target Entropy                                                    & 0 \\
    Replay Buffer Size                                                      & 1e6 transitions \\
    Replay Batch Size                                                       & 256 \\
    Train-Collect Interval                                                  & 16 \\
    Target Smoothing Term                                                   & 0.005 \\
    \bottomrule
\end{tabular} 
\end{table}

\subsection{Implementation of the baselines}\label{app: implementation-baseline}

The implementation of the baseline follows their original implementations: 
\begin{itemize}
    \item RESPO: \href{https://github.com/milanganai/milanganai.github.io/tree/main/NeurIPS2023/code}{https://github.com/milanganai/milanganai.github.io/tree/main/NeurIPS2023/code} (No license)
    \item CRL: \href{https://github.com/google-research/google-research/tree/master/contrastive_rl}{\url{https://github.com/google-research/google-research/tree/master/contrastive_rl}} (No License)
\end{itemize}

\section{Experiment Details}
\label{app:experiment_detail}
In this section, we provide more details about the benchmarks and the choice of reward function $r$, $\Goalfunc$, cost function $c$ and $C_{cost}$ in each environment. Under the sparse reward setting, we apply the following structure of reward design 
\begin{equation*}
    r(x_t, u_t, x_{t+1}) = R_{goal} \times \mathbbm{1}_{x_{t+1} \in \Goalset} 
\end{equation*}
where $R_{goal}$ is an constant. After doing reward shaping, we add an extra term $\gamma \phi(x_{t+1}) - \phi(x_t)$ and the reward becomes
\begin{equation*}
    r(x_t, u_t, x_{t+1}) = R_{goal} \times \mathbbm{1}_{x_{t+1} \in \Goalset} + \gamma \phi(x_{t+1}) - \phi(x_t) 
\end{equation*}
where $\gamma$ denotes the discount factor. 

Note that we set $R_{goal} =  C_{cost} = 20$ in all the environments. Note that if there is a gap between $\max \{\Goalfunc(x) \mid \Goalfunc(x) < 0\}$, we could get unbiased $\Tilde{z}$ during phase two of \AlgName{} guaranteed by Theorem~\ref{thm:bell_opt_dis}. To achieve better performance in phase two of \AlgName{}, we set 
\begin{equation*}
    \Goalfunc(x) = -300
\end{equation*}
for all $x \in \Goalset$ to maintain such a gap. Also, we implement all the environments in Jax \cite{jax2018github} for better scalability and parallelization.

\subsection{Pendulum}
The Pendulum environment is taken from Gym \cite{brockman2016openai} and the torque limit is set to be $1$. The state space is given by $x = [\theta, \dot \theta]$ where $\theta \in [-\pi, \pi], \dot \theta \in [-8, 8]$. In this task, we do not consider unsafe regions and set 
\begin{equation*}
    \Goalset \coloneqq \{[\theta, \dot \theta] \mid \theta \cdot (\theta + \dot \theta \cdot dt) < 0 \}
\end{equation*}
where $dt=0.05$ is the time interval during environment simulation. This is for preventing environment overshooting during simulation. 

In the Pendulum environment, cost function $c$ is given by
\begin{align*}
    c(x_t, u_t, x_{t+1}) = \begin{cases}
        0 & \text{if} \quad \Vert u_t \Vert < 0.1 \\
        8 \Vert u \Vert^2 & \text{if} \quad \Vert u_t \Vert \ge 0.1
    \end{cases}
\end{align*}
for better visualization of policies with different energy consumption. 
$\Goalfunc$ is given by 
\begin{align*}
    \Goalfunc(x) = \begin{cases}
            100 \theta^2 & \text{if} \quad x \not \in \Goalset \\
            -300 & \text{if} \quad x \in \Goalset
    \end{cases}
\end{align*}

\subsection{Safety Hopper}
The Safety Hopper environment is taken from Safety Mujoco, we add static obstacles in the environment to increase the difficulty of the task. 
We use $x$ to denote the x-axis position of the head of Hopper, $y$ to be the y-axis position of the head of Hopper. Then the goal region can be described as 
\begin{equation*}
    \Goalset \coloneqq \{(x, y) \mid \Vert [x, y] - [2.0, 1.4] \Vert < 0.1\}
\end{equation*}
The unsafe set is described as 
\begin{equation*}
    \Failset \coloneqq \{(x, y) \mid  0.95 \le x \le 1.05, y \ge 1.3\}
\end{equation*}
We use $\Tilde{x}^{thigh}, \Tilde{x}^{leg}, \Tilde{x}^{foot}$ to denote the angular velocity of the thigh, leg, foot hinge. The cost function is described as 
\begin{equation*}
    c(x_t, u_t, x_{t+1}) = l(x_{t}^{thigh}, u_t^1) + l(x_{t}^{leg}, u_t^2) + l(x_{t}^{foot}, u_t^3)
\end{equation*}
where 
\begin{equation*}
    l(a, b) = \begin{cases}
        0 & \text{if} \quad \Vert a \cdot b \Vert < 0.4 \\
        0.15 a^2 \cdot b^2 & \text{if} \quad \Vert a \cdot b \Vert > 0.4
    \end{cases}
\end{equation*}
$\Goalfunc$ is given by 
\begin{equation*}
    \Goalfunc(\Tilde{x}) = \begin{cases}
            100 \sqrt{(x - 2)^2 + 100(y - 1.4)^2} - 40 & \text{if} \quad \Tilde{x} \not \in \Goalset \\
            -300 & \text{if} \quad \Tilde{x} \in \Goalset
    \end{cases}
\end{equation*}

\subsection{Safety HalfCheetah}
The Safety HalfCheetah environment is taken from Safety Mujoco, we add static obstacles in the environment to increase the difficulty of the task. 
We use $x_{front}$ to denote the x-axis position of the front foot of Halfcheetah, $y_{front}$ to be the y-axis position of the back foot of Halfcheetah, $x_{back}$ to denote the x-axis position of the back foot of Halfcheetah, $y_{back}$ to be the y-axis position of the back foot of Halfcheetah, $x_{head}$ to denote the x-axis position of the head of Halfcheetah, $y_{head}$ to be the y-axis position of the head of Halfcheetah. Then the goal region can be described as 
\begin{equation*}
    \Goalset \coloneqq \{(x_{head}, y_{head}) \mid \Vert [x_{head}, y_{head}] - [5.0, 0.0] \Vert < 0.2\}
\end{equation*}
The unsafe set is described as 
\begin{align*}
    \Failset \coloneqq &\{(x_{front}, y_{front}) \mid  y_{front} < 0.25,  2.45 < x_{front} < 2.55\} \\
    & \cup \{(x_{back}, y_{back}) \mid  y_{back} < 0.25,  2.45 < x_{back} < 2.55\}
\end{align*}
The cost function is described as 
\begin{equation*}
    c(x_t, u_t, x_{t+1}) = \Vert u_t \Vert^2
\end{equation*}
$\Goalfunc$ is given by 
\begin{equation*}
    \Goalfunc(\Tilde{x}) = \begin{cases}
            100 \sqrt{(x_{head} - 2)^2 + (y_{head} - 1.4)^2} - 20 & \text{if} \quad \Tilde{x} \not \in \Goalset \\
            -300 & \text{if} \quad \Tilde{x} \in \Goalset
    \end{cases}
\end{equation*}

\subsection{FixedWing}
FixedWing environment is taken from \cite{so2023solving} and we follow the same design of $\Failset$ as \cite{so2023solving}. We denote the $x_{PE}$ as the eastward displacement of F16 with given state $x$. Then the goal region $\Goalset$ is given by 
\begin{equation*}
    \Goalset \coloneqq \{x \mid  1975 \le x_{PE} \le 2025\}
\end{equation*}
The cost $c$ is given by 
\begin{align*}
    c(x_t, u_t, x_{t+1}) = 4\Vert u_t / [1, 25, 25, 25] \Vert^2
\end{align*}
and $\Goalfunc$ is given by 
\begin{equation*}
    \Goalfunc(x) = \begin{cases}
            \frac{\Vert x_{PE} - 2000\Vert - 25}{4} & \text{if} \quad x \not \in \Goalset \\
            -300 & \text{if} \quad x \in \Goalset
    \end{cases}
\end{equation*}

\subsection{Quadrotor in Wind Field}

We take quadrotor dynamics from crazyflies and wind field environments in the urban area from \cite{waslander2009wind}. The wind field will disturb the quadrotor with extra movement on both $x$-axis and $y$-axis. There are static building obstacles in the environment and we treat them as the unsafe region $\Failset$. The goal for the quadrotor is to reach the mid-point of the city. We divide the whole city into four sections and train single policy on each of the sections. We use $x \in [-30, 30]$ to denote the x-axis position of quadrotor, $y \in [-30, 30]$ to be the y-axis position of quadrotor.
\begin{equation*}
    \Goalset \coloneqq \{(x, y) \mid \Vert [x, y] \Vert \le 4\}
\end{equation*}
The cost $c$ is given by 
\begin{align*}
    c(x_t, u_t, x_{t+1}) = \frac{\Vert u_t \Vert^2}{2}
\end{align*}
$\Goalfunc$ is given by 
\begin{align*}
    \Goalfunc(\Tilde{x}) = \begin{cases}
             10 \sqrt{(x - x_{goal})^2 + 10(y - y_{goal})^2} - 40 & \text{if} \quad x \not \in \Goalset \\
            -300 & \text{if} \quad x \in \Goalset
    \end{cases}
\end{align*}

\subsection{PointGoal}

The PointGoal environment is taken from Safety Gym \cite{ray2019benchmarking}
We implement \texttt{PointGoal} environments in Jax. In Safety Gym environment, we do not perform reward-shaping and use the original reward defined in Safety Gym environments. In this case, the distance reward is set also to be $20$ in order to align* with $C_{goal}$ and $C_{cost}$. Different from sampling outside the hazard region which is implemented in Safety Gym, we allow Point to be initialized within the hazard region. We use $x$ to denote the x-axis position of Point, $y$ to be the y-axis position of Point, $x_{goal}$ to denote the x-axis position of Goal, and $y_{goal}$ to denote the y-axis position of Goal. 
The goal region is given by 
\begin{equation*}
    \Goalset \coloneqq \{(x, y) \mid \Vert [x, y] - [x_{goal}, y_{goal}] \Vert \le 0.3\}
\end{equation*}
The cost $c$ is given by 
\begin{align*}
    c(x_t, u_t, x_{t+1}) = \frac{\Vert u_t \Vert^2}{2}
\end{align*}
$\Goalfunc$ is given by 
\begin{align*}
    \Goalfunc(\Tilde{x}) = \begin{cases}
            100 \sqrt{(x - x_{goal})^2 + (y - y_{goal})^2} - 30 & \text{if} \quad x \not \in \Goalset \\
            -300 & \text{if} \quad x \in \Goalset
    \end{cases}
\end{align*}

\subsection{Experiment Harware}
\label{add:hardware}

We run all our experiments on a computer with CPU AMD Ryzen Threadripper 3970X 32-Core Processor and with 4 GPUs of RTX3090. It takes at most 4 hours to train on every environment.

\section{Additional Experiment Results}
\label{app:experiment_more_plots}
We put additional experiment results in this section.

\subsection{Additional Cumulative Cost and Reach Rates} \label{app:subsec:reachrate_extra}
We show the cumulative cost and reach rates of the final converged policies for additional environments (\texttt{F16} and \texttt{Safety Hopper}) in \Cref{fig:shaped_experiments_app}.
\begin{figure}[htbp]
    \centering
    \begin{subfigure}[b]{0.49\linewidth}
        \centering
        \includegraphics[width=\linewidth]{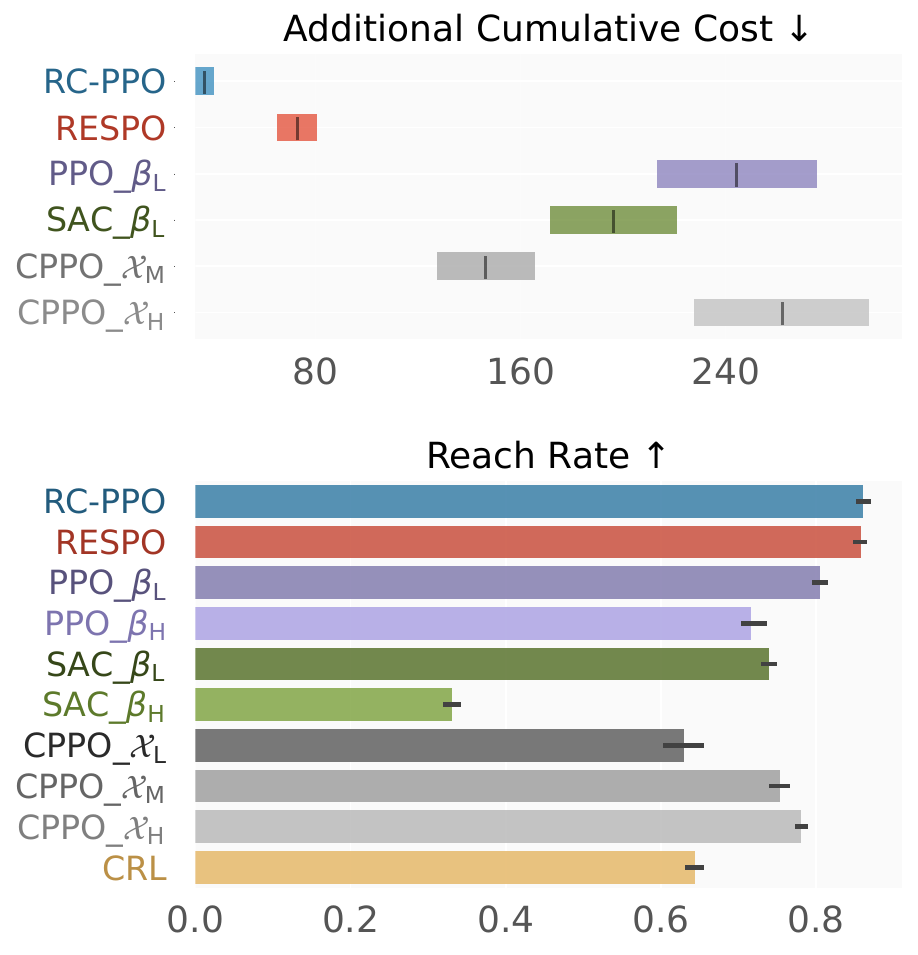}
        \caption{\texttt{FixedWing}}
    \end{subfigure}\begin{subfigure}[b]{0.49\linewidth}
        \centering
        \includegraphics[width=\linewidth]{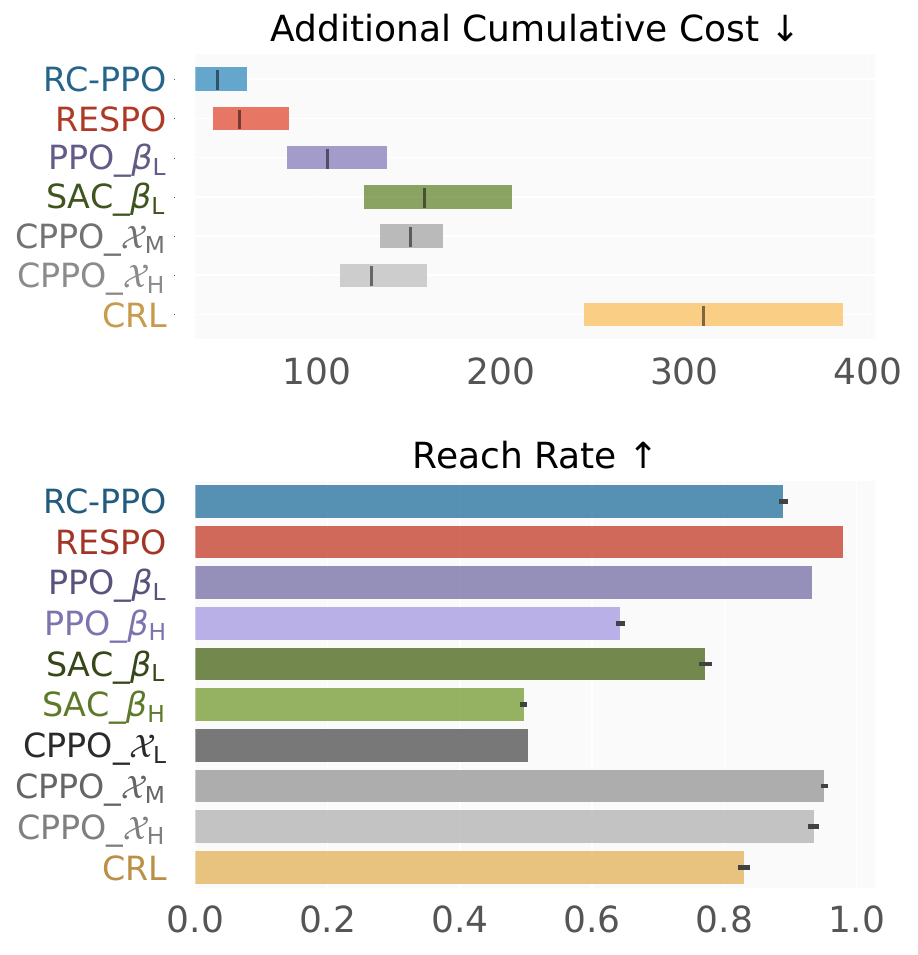}
        \caption{\texttt{Safety Hopper}}
    \end{subfigure}
    \caption{
    Cumulative cost and reach rates of the final converged policies.
    }
    \label{fig:shaped_experiments_app}
\end{figure}

\subsection{Visualization of learned policy for different $z$}
\label{app:subsec:viz_pol_z}
To obtain better intuition for how the learned policy depends on $z$, we rollout the policy choices of $z_0$ in the \texttt{Pendulum} environment and visualize the results in \Cref{fig:z_viz}.
\begin{figure}
    \centering
    \includegraphics[width=\linewidth]{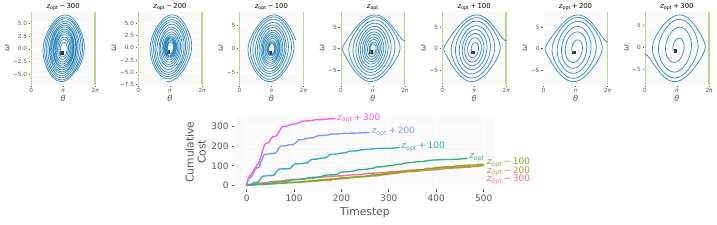}
    \caption{Learned \AlgName{} policy for different $z$ on \texttt{Pendulum}. For a smaller cost lower-bound $z$, cost minimization is prioritized at the expense of not reaching the goal. For a larger cost lower-bound $z$, the goal is reached using a large cumulative cost. Performing rootfinding to solve for the optimal $z_{\mathrm{opt}}$ \textit{automatically} finds the policy that minimizes cumulative costs while still reaching the goal.}
    \label{fig:z_viz}
\end{figure}

\subsection{Grid search}
\label{app:subsec:grid}

We perform an extensive grid search over different reward coefficients for the baseline PPO method and plotted the Pareto front across the reach rate and cost in \Cref{fig:grid}.
The reward we use is 
\begin{equation*}
    r=R_{goal} \times \mathbbm{1}_{x \in \Goalset} - P_{goal} \times \mathbbm{1}_{x \not \in \Goalset} - \beta c(x,u).
\end{equation*}
and we search over the Cartesian product of $R_{goal}=\{2,20,200,2000,20000\},P_{goal}=\{1,10,100,1000,10000\}, \beta=\{0.1,1,10\}$.

\subsection{Performance with external noise}
\label{app:subsec:noise}

\begin{table}
\centering
\begin{tabular}{lccc}
\hline \textbf{Algorithm} & \textbf{Reach Rate} & \textbf{$+$ Small Noise} & \textbf{$+$ Large Noise} \\
\hline 
RCPPO  & 1.00 & 1.00 & 1.00 \\
RESPO & 1.00 & 1.00 & 1.00 \\
PPO $\beta_L$ & 1.00 & 1.00 &  1.00 \\
PPO $\beta_H$ & 0.31 & 0.38 & 0.34 \\
SAC $\beta_L$ & 1.00 & 1.00 & 1.00 \\
SAC $\beta_H$ & 0.21 & 0.37	& 0.20 \\
CPPO $\mathcal{X}_L$ & 0.67 & 0.65 & 0.65 \\
CPPO $\mathcal{X}_M$ & 1.00 & 1.00 & 1.00 \\
CPPO $\mathcal{X}_H$ & 1.00 & 1.00 & 0.99 \\
CRL & 1.00 & 1.00 & 1.00 \\
\hline
\end{tabular}
\caption{Reach rate of final converged policies with different levels of noise to the output control}
\label{table:reach}
\end{table}

We also performed additional experiments to illustrate the performance of \AlgName{} under a changing environment. Specifically, we add uniform noise to the output of the learned policy and see what happens in the \texttt{Pendulum} environment.

We first compare the reach rates of the different methods in Table~\ref{table:reach}. In this environment, we see that noise does not affect the reach rate too much.

\begin{table}
\centering
\begin{tabular}{lccc}
\hline \textbf{Algorithm} & \textbf{Additional Cumulative Cost} & \textbf{$+$ Small Noise} & \textbf{$+$ Large Noise} \\
\hline 
RCPPO  & 35.3 & 41.4 & 132.9 \\
RESPO & 92.0 & 93.6 & 179.2 \\
PPO $\beta_L$ & 97.7 & 98.6 &  150.2 \\
SAC $\beta_L$ & 156.3 & 157.6 & 270.5 \\
CPPO $\mathcal{X}_M$ & 223.2 & 220.5 & 209.0 \\
CPPO $\mathcal{X}_H$ & 212.7 & 299.8 & 298.4 \\
CRL & 228.3 & 229.1 & 261.1 \\
\hline
\end{tabular}
\caption{Additional cumulative cost of final converged policies with different levels of noise to the output control}
\label{table:cost}
\end{table}

Next, we look at how the cumulative cost changes with noise by comparing methods with a near 100\% reach rate in Table~\ref{table:cost}. Unsurprisingly, larger amounts of noise reduce the performance of almost all policies. Even with the added noise, RC-PPO uses the least cumulative cost compared to all other methods.

\section{Discussion on Limitation of CMDP-Based Algorithms}
\label{app:limitation}

In this section, we will use an example to illustrate further why CMDP-based algorithms won't solve the minimum-cost reach-avoid problem optimally compared with our method. We focus on two parts of CMDP formulation:
\begin{itemize}
    \item Weight coefficient assigned to different objectives
    \item Threshold assigned to each constraint
\end{itemize}

\begin{figure}
    \centering
    \includegraphics[width=0.8\linewidth]{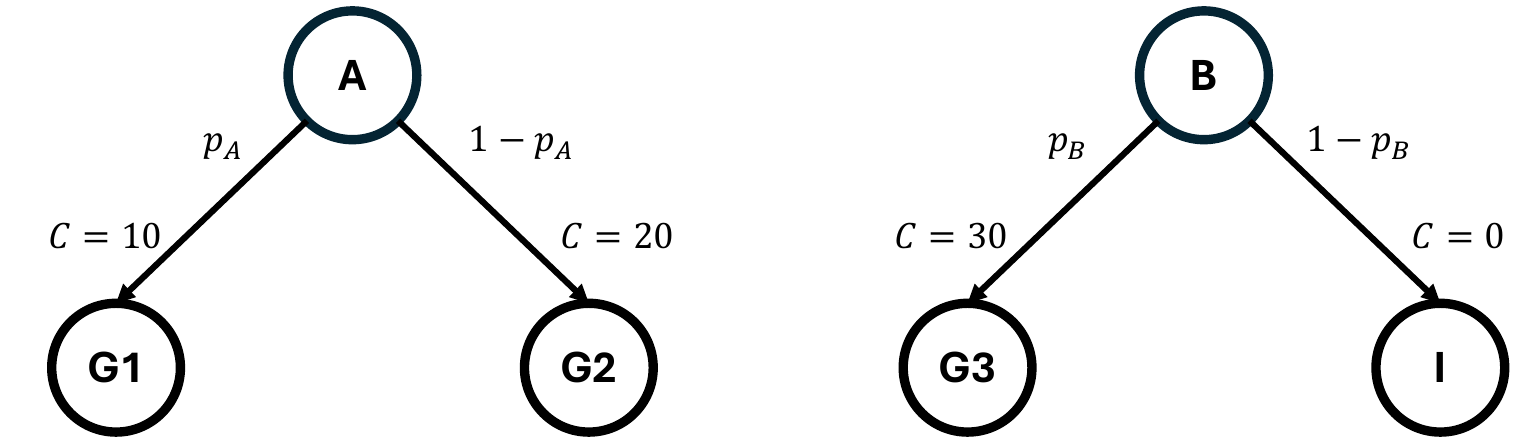}
    \caption{Minimum-cost reach-avoid example to illustrate the limitation of CMDP-based formulation.}
    \label{fig:example}
\end{figure}

Consider the minimum-cost reach-avoid problem shown in \Cref{fig:example}, where we use $C$ to denote the cost. 
States $A$ and $B$ are two initial states with the same initial distribution probability. State $G_1$, $G_2$, and $G_3$ are three goal states. State $I$ is the absorbing state (non-goal). We use $p_A$ and $p_B$ to denote the policy parameter, which represents the probability of choosing \textit{left} action on state $A$ and $B$ separately.

The optimal policy for this minimum-cost reach-avoid problem is to take the left action from both $A$ and $B$, i.e., $p_A = p_B = 1$, which gives an expected cost of 
\begin{equation*}
    0.5 \cdot 10 + 0.5 \cdot 30 = 20 
\end{equation*}
To convert this into a multi-objective problem, we introduce a reward that incentivizes reaching the goal as follows (we use $R$ to denote reward):
\begin{align}
    R(A, G_1) = 10, \ R(A, G_2) = 20, \ R(B, G_3) = 20, \ R(A, I) = 0
\end{align}
This results in the following multi-objective optimization problem: 
\begin{equation}
    \label{equ:app_1}
    \min_{p_A, p_B \in [0, 1]} \quad (-R, C)
\end{equation}

\subsection{Weight assignment}
We first consider solving multi-objective optimization Problem~\ref{equ:app_1} by assigning weights on different objectives. We introduce $w \geq 0$, giving 
\begin{equation}
    \min_{p_A, p_B \in [0, 1]} \quad -R + w C
\end{equation}
Solving the scalarized Problem~\ref{equ:app_1} gives us the following solution as a function of $w$: 
\begin{equation}
    p_A = \mathbbm{1}_{(w \ge 1)}, \quad p_B = \mathbbm{1}_{(w \le \frac{2}{3})}
\end{equation} 
Notice that the *true* optimal solution of $p_A = p_B = 1$ is NOT an optimal solution to the original minimum-cost reach-avoid problem shown in \Cref{fig:example} under any $w$.

Hence, \textbf{the optimal solution of the surrogate multi-objective problem can be suboptimal for the original minimum-cost reach-avoid problem under any weight coefficients.}

Of course, this is just one choice of reward function where the optimal solution of the minimum-cost reach-avoid problem cannot be recovered. Given knowledge of the optimal policy, we can construct the reward such that the multi-objective optimization problem does include the optimal policy as a solution. However, this is impossible to do if we do not have prior knowledge of the optimal policy, as is typically the case.

\subsection{Threshold assignment}

Next, we consider solving multi-objective optimization Problem~\ref{equ:app_1} by assigning a threshold $\mathcal{X}\_{\text{thresh}}$ on the cost constraint:
\begin{equation}
    0.5(10p_A+20 (1-p_A)) + 0.5(30 p_B) \leq \mathcal{X}_{\text{thresh}}
\end{equation}

The optimal solution to this CMDP can be solved to be
\begin{equation}
    p_A = 0, \quad p_B = \frac{\mathcal{X}_{\text{thres}} - 10}{15}.
\end{equation}

However, \textbf{the \*true\* optimal solution of $p_A = p_B = 1$ is NOT an optimal solution to the CMDP.} To see this, taking $\mathcal{X}\_{\text{thresh}} = 20$, the real optimal solution $p_A = p_B = 1$ gives a reward of $R=15$, but the CMDP solution $p_A = 0, p_B = \frac{20 - 10}{15} = \frac{2}{3}$ gives $R=23.33 > 15$. Moreover, any uniform scaling of the rewards or costs does not change the solution.

We can "fix" this problem if we choose the rewards to be high only along the optimal solution $p_A = p_B = 1$, but this requires knowledge of the optimal solution beforehand and is not feasible for all problems.

Another way to "fix" this problem is if we consider a "per-state" cost threshold, e.g., 
\begin{equation}
    10 p_A + 20(1-p_A) \leq \mathcal{X}_A, \qquad 10 p_B + 20(1-p_B) \leq \mathcal{X}_B
\end{equation} 
Choosing exactly the cost of the optimal policy, i.e., $\mathcal{X}_A = 10$ and $\mathcal{X}_B \geq 30$, also recovers the optimal solution of $p_A = p_B =1$. This now requires knowing the smallest cost to reach the goal for every state, which is difficult to do beforehand and not feasible. On the other hand, RC-PPO does exactly this in the second phase when optimizing for $z_0$. We can thus interpret \textbf{RC-PPO as automatically solving for the best cost threshold to use as a constraint for every initial state.}

\FloatBarrier
\section{Broader impact}\label{app: broader impact}

Our proposed algorithm solves an important problem that is widely applicable to many different real-world tasks including robotics, autonomous driving, and drone delivery. Solving this brings us one step closer to more feasible deployment of these robots in real life.
However, the proposed algorithm requires GPU training resources, which could contribute to increased energy usage. 

\end{document}